\newcommand{\longversion}[1]{#1}
\newcommand{\shortversion}[1]{}
\newcommand{\tuplecolor}[1]{\textcolor{#1}}
\newcommand{\inputPredColor}{orange!55!red}
\newcommand{\outputPredColor}{blue!45!black}
\newcommand{\statePredColor}{green!62!black}
\newcommand{\specialPredColor}{red!62!black}
\newcommand{\algorithmfootnote}[2][\footnotesize]{
  \let\old@algocf@finish\@algocf@finish
  \def\@algocf@finish{\old@algocf@finish
    \leavevmode\rlap{\begin{minipage}{\linewidth}
    #1#2
    \end{minipage}}
  }
}
\newtheorem{lemma}{Lemma}
\newtheorem{example}[lemma]{Example}
\newtheorem{proposition}[lemma]{Proposition}
\newtheorem{hypothesis}[lemma]{Hypothesis}
\newtheorem{theorem}[lemma]{Theorem}
\newtheorem{corollary}[lemma]{Corollary}
\newenvironment{restateproposition}[1][\unskip]{%
  \begingroup

}%
{%
  \addtocounter{lemma}{-1}
  \endgroup
}%
\newcommand{\TODO}[1]{{\color{red}/*#1*/}}
\newcommand{\johannes}[1]{\TODO{J: #1}}
\newcommand{\problemFont}[1]{\mathsf{#1}}
\newcommand{\complexityClassFont}[1]{\ensuremath{\mathrm{#1}}}
\newcommand{\numberCred}{\problemFont{\#Cred}}
\newcommand{\Cred}{\problemFont{Cred}}
\newcommand{\Skep}{\problemFont{Skep}}
\newcommand{\SAT}{\protect\ensuremath{\problemFont{SAT}}\xspace}
\newcommand{\PCC}{\protect\ensuremath{\problemFont{\#PCred}}\xspace}
\newcommand{\countCirc}{\problemFont{\#Circumscription}}
\renewcommand{\phi}{\varphi}
	\newcommand{\citex}[1]{\citeauthor{#1}~\shortcite{#1}}
	\newcommand{\citey}[1]{\citeauthor{#1},~\citeyear{#1}}
	\newcommand{\citex}[1]{\citet{#1}}
	\newcommand{\citey}[1]{\citex{#1}}
\newcommand{\bigO}[1]{\ensuremath{{\mathcal O}(#1)}}
\newcommand{\Nat}{\mathbb{N}} %
\newcommand{\Card}[1]{\left|#1\right|}
\newcommand{\CCard}[1]{\|#1\|}
\newcommand{\SB}{\{\,}%
\newcommand{\SM}{\mid}%
\newcommand{\SE}{\,\}}%
\def\hy{\hbox{-}\nobreak\hskip0pt}
\newcommand{\eqdef}{\ensuremath{\,\mathrel{\mathop:}=}}
\newcommand{\CONFL}{\ensuremath{{\algo{CONF}}}\xspace}
\newcommand{\ADM}{\ensuremath{{\algo{ADM}}}\xspace}
\newcommand{\PROJ}{\ensuremath{{\algo{PROJ}}}\xspace}
\newcommand{\PREF}{\ensuremath{{\algo{PREF}}}\xspace}
\newcommand{\STAG}{\ensuremath{{\algo{STAG}}}\xspace}
\newcommand{\SEMI}{\ensuremath{{\algo{SEMI}}}\xspace}
\newcommand{\STAB}{\ensuremath{{\algo{STAB}}}\xspace}
\newcommand{\COMP}{\ensuremath{{\algo{COMP}}}\xspace}
\newcommand{\tw}[1]{\mathit{tw}(#1)}
\DeclareMathOperator{\width}{width}
\DeclareMathOperator{\children}{children}
\newcommand{\TTT}{\ensuremath{\mathcal{T}}}%
\DeclareMathOperator{\type}{type}
\newcommand{\intr}{\textit{int}}
\newcommand{\leaf}{\textit{leaf}}
\newcommand{\rem}{\textit{rem}}
\newcommand{\join}{\textit{join}}
\newcommand{\SEM}{\ensuremath{\mathcal{S}}\xspace}
\newcommand{\ALL}{\ensuremath{\mathrm{ALL}}\xspace}
\DeclareMathOperator{\adef}{def}
\DeclareMathOperator{\stable}{stable}
\DeclareMathOperator{\stage}{stage}
\DeclareMathOperator{\semistable}{semi-stable}
\DeclareMathOperator{\preferred}{preferred}
\DeclareMathOperator{\complete}{complete}
\DeclareMathOperator{\admissible}{admissible}
\DeclareMathOperator{\conflictfree}{conflict-free}
\DeclareMathOperator{\orig}{origins}
\DeclareMathOperator{\origs}{origins}
\DeclareMathOperator{\pmc}{pc}
\DeclareMathOperator{\ipmc}{ipc}
\DeclareMathOperator{\sipmc}{s-ipc}
\DeclareMathOperator{\bucket}{=_P}%
\DeclareMathOperator{\buckets}{buckets}
\DeclareMathOperator{\post}{post-order}
\DeclareMathOperator{\guess}{States}
\newcommand{\parsi}{\text{parsimonious}}
\newcommand{\subtr}{\text{subtractive}}
\newcommand{\dpa}{\ensuremath{\mathtt{DP}}}
\newcommand{\algo}[1]{\ensuremath{\mathbb{#1}}}
\newcommand{\AlgA}{\algo{A}}%
\newcommand{\mdpa}[1]{\ensuremath{\mathtt{PCNT}_{#1}}}
\newcommand{\MAI}[2]{\ensuremath{#1^+_{#2}}}%
\newcommand{\MAR}[2]{\ensuremath{#1^-_{#2}}}%
\newcommand{\MARR}[2]{\ensuremath{#1^\sim_{#2}}}%
\newcommand{\MAII}[2]{\ensuremath{{#1}^\uplus_{{#2}}}}%
\newcommand{\MAIII}[2]{\ensuremath{{#1}^\oplus_{{#2}}}}%
\newcommand{\numberDotP}{\complexityClassFont{\#\cdot\Ptime}}
\newcommand{\numberDotCoNP}{\complexityClassFont{\#\cdot\co\NP}}
\newcommand{\co}{\complexityClassFont{co}}
\newcommand{\NP}{\complexityClassFont{NP}}
\newcommand{\Ptime}{\complexityClassFont{P}}
\title{Counting Complexity for Reasoning in Abstract Argumentation\thanks{Funded by Austrian Science Fund FWF grants I2854, Y698, and P30168-N31, as well as the German Research Fund DFG grants HO 1294/11-1 and ME 4279/1-2. The first two authors are also affiliated with the University of Potsdam, Germany. \longversion{This document is an extended version of a paper that has been accepted for publication at AAAI-19.}}}
\author{Johannes K. Fichte\\TU Dresden\\Int.\ Center for Computational Logic\\Fakult\"at Informatik\\01062 Dresden, Germany\\johannes.fichte@tu-dresden.de
\shortversion{\And}\longversion{\and} Markus Hecher\\TU Wien\\Institute of Logic and Computation\\Favoritenstra{\ss}e 9-11 / E192\\1040 Vienna, Austria\\hecher@dbai.tuwien.ac.at 
\shortversion{\And}\longversion{\and} Arne Meier\\Leibniz Universit\"at Hannover\\Institut f\"ur Theoretische Informatik\\Appelstra{\ss}e 4\\30167 Hannover, Germany\\ meier@thi.uni-hannover.de}
\date{}
\begin{document}

\maketitle

\begin{abstract}
  In this paper, we consider counting and projected model counting of
  extensions in abstract argumentation for various semantics.
  When asking for projected counts we are interested in counting the
  number of extensions of a given argumentation framework while
  multiple extensions that are identical when restricted to the
  projected arguments count as only one projected extension.
  We establish classical complexity results and parameterized
  complexity results when the problems are parameterized by treewidth
  of the undirected argumentation graph.
  To obtain upper bounds for counting projected extensions, we
  introduce novel algorithms that exploit small treewidth of the
  undirected argumentation graph of the input instance by dynamic
  programming (DP). Our algorithms run in time double or triple
  exponential in the treewidth depending on the considered semantics.
  Finally, we take the exponential time hypothesis (ETH) into account
  and establish lower bounds of bounded treewidth algorithms for
  counting extensions and projected extension.
  %

\end{abstract}

\section{Introduction}

Abstract argumentation~\cite{Dung95a,Rahwan07a} is a central framework
for modeling and the evaluation of arguments and its reasoning with
applications to various areas in artificial intelligence
(AI)~\shortversion{\cite{AmgoudPrade09a,RagoCocarascuToni18a}}\longversion{\cite{AmgoudPrade09a,DunneBench-Capon05a,Maher16a,McBurneyParsonsRahwan11,RagoCocarascuToni18a}}.
%
%
%
The semantics of argumentation is described in terms of arguments that
are acceptable with respect to an abstract framework, such as stable
or admissible. Such arguments are then called extensions of a
framework.
In argumentation, one is particularly interested in the credulous
or skeptical reasoning problem, which asks, given an argumentation framework and an
argument, whether the argument is contained in some or all extension(s) of the
framework, respectively.
A very interesting, but yet entirely unstudied question in abstract
argumentation is the computation and the computational complexity of
counting, which asks for outputting the number of extensions with
respect to a certain semantics. By counting extensions, we can answer
questions such as how many extensions are available containing certain
arguments.
An even more interesting question is how many extensions containing
certain arguments exist when restricted to a certain subset of the
arguments
which asks for outputting the number of projected
extensions.

Interestingly, the computational complexity of the decision problem is
already quite hard. More precisely, the problem of credulous acceptance,
which asks whether a given argument is contained in at least one
extension, is \NP-complete for the stable semantics and even
$\Sigma^p_2$-complete for the semi-stable
semantics~\cite{DunneBench-Capon02a,DvorakWoltran10,Dvorak12a}.
The high worst-case complexity is often a major issue to establish
algorithms for frameworks of abstract argumentation. A classical way
in parameterized complexity and algorithmics is to identify structural
properties of an instance and establish efficient algorithms under
certain structural restrictions~\cite{CyganEtAl15}. Usually, we aim
for algorithms that run in time polynomial in the input size and
exponential in a measure of the structure, so-called
\emph{fixed-parameter tractable} algorithms. Such runtime results
require more fine-grained runtime analyses and more evolved reductions
than in classical complexity theory where one considers only the size of
the input.
Here, we take a graph-theoretical measure of the undirected graph of
the given argumentation framework into account. As measure we take
treewidth, which is arguably the most prominent graph invariant in
combinatorics of graph theory and renders various graph problems
easier if the input graph is of bounded treewidth.

Our results are as follows:

\begin{itemize}
\item We establish the classical complexity of counting extensions and
  counting projected extensions for various semantics in abstract
  argumentation.
\item We present an algorithm that solves counting projected
  extensions by exploiting treewidth in runtime double exponential in
  the treewidth or triple exponential in the treewidth depending on
  the considered semantics.
\item Assuming the exponential time hypothesis (ETH), which
  states that there is some real~$s > 0$ such that we cannot decide
  satisfiability of a given 3-CNF formula~$\phi$ in
  time~$2^{s\cdot\Card{\phi}}\cdot\CCard{\phi}^{\mathcal{O}(1)}$, we
  show that one \emph{cannot} count projected extensions 
  double exponentially in the treewidth.
\end{itemize}

\medskip
\noindent
\textbf{Related work.}  
\citex{BaroniDunneGiacomin10} considered general extension counting
and show \#P-completeness and identify tractable cases. We generalize
these results to the reasoning problems.
\citex{LampisMitsouMengel18} considered bounded treewidth algorithms
and established lower bounds for the runtime of an
algorithm 
that solves \longversion{credulous or skeptical} reasoning in abstract
argumentation under the admissible and preferred semantics.
These results do not trivially extend to counting and are based on
reductions to QBF. 
\longversion{%
  While these reductions yield asymptotically tight bounds, they still
  involve a constant factor.
}%
\shortversion{%
  They yield asymptotically tight bounds, but still involve a constant
  factor.
}%
 Unfortunately, already a small increase even by one can amount
to one order of magnitude in inference time with \emph{dynamic
  programming (DP)} algorithms for QBF. As a result, a factor of just
two can already render it impractical.
\citex{FichteEtAl18} gave DP algorithms
for projected \#SAT and established that it cannot be solved in
runtime double exponential in the treewidth under ETH using results by
\citex{LampisMitsou17}, who established lower bounds for the problem
$\exists\forall$-\SAT. 
\citex{DvorakPichlerWoltran12} introduced DP algorithms that exploit
treewidth to solve decision problems of various semantics in abstract
argumentation. We employ these results and lift them to projected
counting.
Further, DP algorithms for projected counting in answer set
programming (ASP) were recently presented
\cite{FichteHecher18a,FichteHecher18b}.

\section{Formal Background}\label{sec:prelims}
We use graphs and digraphs as usually defined~\cite{BondyMurty08}
and follow standard terminology in computational
complexity~\cite{Papadimitriou94} and parameterized
complexity~\cite{CyganEtAl15}.
Let $\Sigma$ and $\Sigma'$ be some finite alphabets and
$L \subseteq \Sigma^* \times \Nat$ 
be a parameterized problem. For $(I,k) \in L$, we call
$I \in \Sigma^*$ an \emph{instance} and k the parameter.
%
%
%
%
For a set~$X$, let $2^X$ consist of all subsets of~$X$.
Later we use the generalized combinatorial inclusion-exclusion
principle, which allows to compute the number of elements in the union
over all subsets~\cite{GrahamGrotschelLovasz95a}.
%



\medskip
\noindent\textbf{Counting Complexity.}
We follow standard terminology in this area \cite{DurandHermannKolaitis05,HemaspaandraVollmer95a}.
In particular, we will make use of complexity classes preceded with the sharp-dot operator `$\#\cdot$'.
If $\mathcal C$ is a decision complexity class then $\#\cdot\mathcal C$ is the class of all counting problems whose  witness function\footnote{A \emph{witness} function is a function $w\colon\Sigma^*\to\mathcal P^{<\omega}(\Gamma^*)$, where $\Sigma$ and $\Gamma$ are alphabets, mapping to a finite subset of $\Gamma^*$. Such functions associate with the counting problem ``given $x\in\Sigma^*$, find $|w(x)|$''.} $w$ satisfies (1.) $\exists$ polynomial $p$ such that for all $y\in w(x)$, we have that $|y|\leqslant p(|x|)$, and (2.) the decision problem ``given $x$ and $y$, is $y\in w(x)$?'' is in $\mathcal C$.
A \emph{parsimonious} reduction between two counting problems $\#A,\#B$ preserves the cardinality between the corresponding witness sets and is computable in polynomial time.
A \emph{subtractive} reduction between two counting problems $\#A$ and $\#B$ is composed of two functions $f,g$ between the instances of $A$ and $B$ such that $B(f(x))\subseteq B(g(x))$ and $|A(x)|=|B(g(x))|-|B(f(x))|$, where $A$ and $B$ are respective witness functions.

\medskip
\noindent\textbf{Abstract Argumentation.}
We consider the Argumentation Framework by \citex{Dung95a}.
%
An \emph{argumentation framework~(AF)}, or framework for short, is a
directed graph~$F=(A, R)$ where $A$ is a set of
arguments\footnote{This paper only considers non-empty and finite
  arguments~$A$.} and $R \subseteq A\times A$ a pair arguments
representing direct attacks%
\footnote{Given~$S,S'\subseteq A$. Then, $S\rightarrowtail_R S'$
  denotes $\{s\in S \mid (\{s\}\times S')\cap R \neq \emptyset\}$, and
  $S\leftarrowtail_R S'\eqdef \{s\in S \mid (S' \times \{s\}) \cap
  R\neq \emptyset\}$.}  of arguments.
%
%
%
In argumentation, we are interested in computing so-called
\emph{extensions}, which are subsets~$S \subseteq A$ of the arguments
that meet certain properties according to certain semantics as given
below.
%
%
An argument~$s \in S$, is called \emph{defended by $S$ in $F$} if for
every $(s', s) \in R$, there exists $s'' \in S$ such that
$(s'', s') \in R$.  The family~$\adef_F(S)$ is defined by
$\adef_F(S) \eqdef\SB s \SM s \in A, s \text{ is defended by $S$ in
  $F$} \SE$
%
%
We say $S \subseteq A$ is \emph{conflict-free in~$S$} if
$(S\times S) \cap R = \emptyset$; $S$ is \emph{admissible in $F$} if
(i) $S$ is \emph{conflict-free in $F$}, and
(ii) every $s \in S$ is \emph{defended by $S$ in $F$}.
Assume an admissible set~$S$.
Then,
(iiia) $S$ is \emph{complete in~$F$} if $\adef_F(S) = S$;
(iiib) $S$ is~\emph{preferred in~$F$}, if there is no $S' \supset S$ that is \emph{admissible in $F$};
(iiic) $S$ is \emph{semi-stable in $F$} if there is no admissible set $S' \subseteq A$ in~$F$ with~$S^+_R\subsetneq (S')^+_R$ where $S^+_R:=S\cup\SB a\SM (b,a)\in R, b \in S \SE$;
%
%
%
%
%
%
(iiid) $S$ is \emph{stable in~$F$} if 
every $s \in A \setminus S$ is \emph{attacked} by some $s' \in S$.
A conflict-free set~$S$ is \emph{stage in $F$} if there is no conflict-free set~$S'\subseteq A$ in~$F$ with~$S^+_R\subsetneq (S')^+_R$.
Let $\ALL$ abbreviate the set $\{$admissible, complete, preferred, semi-stable, stable, stage$\}$.
For a semantics~$\SEM \in \ALL$, $\SEM(F)$ denotes the set of
\emph{all extensions} of semantics~\SEM in $F$.
%
%
In general
$\stable(F) \subseteq \semistable(F) \subseteq \preferred(F)$
$\subseteq \complete(F) \subseteq \admissible(F) \subseteq
\conflictfree(F)$ and~$\stable(F) \subseteq \stage(F) \subseteq \conflictfree(F)$.
%
%


\medskip
\noindent
\textbf{Problems of Interest.}
In argumentation one is usually interested in credulous and skeptical
reasoning problems. In this paper, we are in addition interested in
counting versions of these problems.
Therefore, let $\SEM \in \ALL$ be an abstract argumentation
semantic, $F=(A,R)$ be an argumentation framework, and $a \in A$ an
argument. 
The \emph{credulous reasoning problem} $\Cred_\SEM$ asks to decide
whether there is an \SEM-extension of $F$ that contains the
(credulous) argument~$a$. The \emph{skeptical reasoning problem} $\Skep_\SEM$ asks
to decide whether all \SEM-extensions of $F$ contain the argument~$a$.
%
%
The \emph{credulous counting problem $\numberCred_\SEM$} asks to output the
number of \SEM-extensions of~$F$ that contain~$a$,~i.e.,
$\Card{ \SB S \SM S \in \SEM(F), a \in S \SE}$.
%
%
%
%
%
%
%
%
%
%
%
%
%
The \emph{projected credulous counting problem ($\PCC_\SEM$)} asks to
output the number of $\SEM$-extensions restricted to the projection
arguments~$P$,~i.e.,
$\Card{ \SB S \cap P \SM S \in \SEM(F), a \in S \SE}$.
\longversion{%
  One can view \PCC as a generalization of~$\numberCred_S$.
}%
\begin{figure}
	\centering
	\begin{tikzpicture}[every node/.style={draw=black,thin}]
		\node (swim) [] {\textbf{d}rinking cocktails};
		\node (ski) [right=2em of swim] {\textbf{s}urfing};
		\node (wexp) [right=1.5em of ski] {surfing \textbf{e}xpensive};
		\node (regular) [above=0.5em of ski,xshift=-3em] {seasonal surfing \textbf{p}ass req.};
		\node (once) [above=0.5em of wexp] {\textbf{c}heap if once};
		\node (adventure) [below=0.5em of swim] {\textbf{a}dventure req.};
		\node (chill) [below=0.5em of ski] {\textbf{r}elaxing req.};
		
		\draw [-stealth'] (wexp) to (ski);
		\draw [-stealth'] (swim) to (regular);
		\draw [-stealth'] (regular) to (wexp);
		\draw [stealth'-stealth'] (regular) to (once);
		\draw [-stealth'] (once) to (wexp);
		\draw [stealth'-stealth'] (adventure) to (chill);
		\draw [-stealth'] (adventure) to (swim);
		\draw [-stealth'] (chill) to (ski);
	\end{tikzpicture}
	\caption{Argumentation framework $F$: surfing vs. cocktails.}
	\label{fig:decision}
\end{figure}
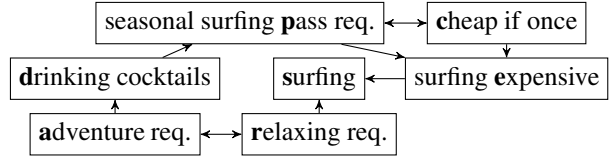
\begin{example}
Consider framework~$F$ from Figure~\ref{fig:decision}, which depicts a framework
for deciding between surfing and drinking cocktails.
Framework~$F$ admits three stable extensions~$\stable(F)=\{\{d, r, c\},
\{s, a, c\}, \{s, a, p\}\}$. \longversion{Then, }$\numberCred_{\stable}$ for argument~$s$ equals~$2$,
whereas $\PCC_{\stable}$ for argument~$s$ restricted to~$P\eqdef\{a,r\}$ equals~$1$.
%
%





\end{example}

\medskip
\noindent\textbf{Tree Decompositions (TDs).} %
For a tree~$T=(N,A,n)$ with root~$n$ and a node~$t \in N$, we let
$\children(t, T)$ be the sequence of all nodes~$t'$ in arbitrarily but
fixed order, which have an edge~$(t,t') \in A$.
Let $G=(V,E)$ be a graph.
A \emph{tree decomposition (TD)} of graph~$G$ is a pair
$\TTT=(T,\chi)$, where $T=(N,A,n)$ is a rooted tree, $n\in N$ the root,
and $\chi$ a mapping that assigns to each node $t\in N$ a set
$\chi(t)\subseteq V$, called a \emph{bag}, such that the following
conditions hold:
(i) $V=\bigcup_{t\in N}\chi(t)$ and
$E \subseteq\bigcup_{t\in N}\SB \{u,v\} \SM u,v\in \chi(t)\SE$; and (ii)
for each $r, s, t$, such that $s$ lies on the path from $r$ to
$t$, we have $\chi(r) \cap \chi(t) \subseteq \chi(s)$.
Then, $\width(\TTT) \eqdef \max_{t\in N}\Card{\chi(t)}-1$.  The
\emph{treewidth} $\tw{G}$ of $G$ is the minimum $\width({\TTT})$ over
all tree decompositions $\TTT$ of $G$.
For arbitrary but fixed $w \geq 1$, it is feasible in linear time to
decide if a graph has treewidth at most~$w$ and, if so, to compute a
TD of width $w$~\cite{Bodlaender96}.
In order to simplify case distinctions in the algorithms, \emph{we
  assume nice TDs}, which can be computed in linear time without
increasing the width~\cite{Kloks94a} and are defined as follows.
For a node~$t \in N$, we say that $\type(t)$ is $\leaf$ if
$\children(t,T)=\langle \rangle$; $\join$ if
$\children(t,T) = \langle t',t''\rangle$ where
$\chi(t) = \chi(t') = \chi(t'') \neq \emptyset$; $\intr$
(``introduce'') if $\children(t,T) = \langle t'\rangle$,
$\chi(t') \subseteq \chi(t)$ and $|\chi(t)| = |\chi(t')| + 1$; $\rem$
(``remove'') if $\children(t,T) = \langle t'\rangle$,
$\chi(t') \supseteq \chi(t)$ and $|\chi(t')| = |\chi(t)| + 1$. If for
every node $t\in N$, $\type(t) \in \{ \leaf, \join, \intr, \rem\}$ and
bags of leaf nodes and the root are empty, then the TD is called
\emph{nice}.
%


\section{Classical Counting Complexity}
In this section, we investigate the classical counting complexity of the credulous reasoning problem.  

\begin{lemma}[$\star$\footnote{Proofs of marked statements (``$\star$'') are ommitted or shortened.}]\label{lem:numberext-in-numberp}
$\numberCred_\SEM$ is in\\
 (1) $\numberDotP$ if $\SEM\in\{$conflict-free, stable, admissible, complete$\}$ \longversion{and\\}
 (2) $\numberDotCoNP$ if $\SEM\in\{$preferred, semi-stable, stage$\}$.
\end{lemma}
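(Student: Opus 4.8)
The plan is to invoke the definitions of $\numberDotP$ and $\numberDotCoNP$ directly. For a fixed semantics~$\SEM \in \ALL$, I take the witness function~$w$ that maps an instance~$(F,a)$ with $F=(A,R)$ to the set~$\SB S \SM S \in \SEM(F),\, a \in S \SE$; by definition $\Card{w(F,a)}$ is exactly the value~$\numberCred_\SEM$ asks to output. Every witness~$S$ is a subset of~$A$ and hence encodable in~$\bigO{\Card{A}}$ bits, so the polynomial length-bound of the sharp-dot classes is immediate. It therefore suffices to pin down the complexity of the associated \emph{verification} problem: given $(F,a)$ and a candidate~$S \subseteq A$, decide whether $S \in \SEM(F)$ and $a \in S$.

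For part~(1), I argue that verification is in~$\Ptime$. Testing $a\in S$ and conflict-freeness, i.e.,~$(S \times S) \cap R = \emptyset$, is done by inspecting all pairs. Stability additionally checks that each argument in~$A\setminus S$ is attacked by some element of~$S$. Admissibility additionally checks the defense condition by iterating, for each $s\in S$ and each attacker of~$s$, over the elements of~$S$. Completeness additionally computes~$\adef_F(S)$ and compares it with~$S$. All of these run in polynomial time, so $\numberCred_\SEM \in \numberDotP$ for the four listed semantics.

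For part~(2), verification still contains a polynomial-time admissibility (resp.\ conflict-freeness) check, but now also a \emph{maximality} condition whose complement I show to be in~$\NP$. For preferred, the condition ``no admissible $S' \supsetneq S$ exists'' has complement ``some admissible $S' \supsetneq S$ exists'', which is in~$\NP$: guess~$S'$ and verify admissibility in polynomial time. Hence the condition itself is in~$\co\NP$, and since $\Ptime \subseteq \co\NP$ the conjunction with the polynomial part stays in~$\co\NP$. For semi-stable and stage the very same template applies; the only change is that the guessed~$S'$ must satisfy the range condition $S^+_R \subsetneq (S')^+_R$ with~$S'$ admissible (resp.\ conflict-free), which is again checkable in polynomial time once~$S'$ is fixed. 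This yields $\numberCred_\SEM \in \numberDotCoNP$ for the three remaining semantics.

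The argument is essentially a careful bookkeeping exercise, so the one step I would pay most attention to is part~(2): one must confirm that the maximality and range-maximality conditions defining preferred, semi-stable, and stage have \NP\ complements, namely an existential guess of a single set~$S'$ followed by a polynomial check, and that combining a polynomial-time condition with such a $\co\NP$ condition does not escape~$\co\NP$. Once this is verified, both membership claims follow directly from the definitions of the sharp-dot classes.
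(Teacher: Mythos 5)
Your proposal is correct and follows essentially the same route as the paper: both arguments rest on the observation that the witness set is $\SB S \SM S \in \SEM(F), a \in S\SE$, that membership verification is in $\Ptime$ for conflict-free, stable, admissible, and complete (the paper cites Coste-Marquis et al.\ for this; you verify it directly), and that for preferred, semi-stable, and stage the subset-/range-maximality condition places verification in $\co\NP$ via an $\NP$ complement that guesses the larger set $S'$. Your write-up is somewhat more explicit about the closure of $\co\NP$ under conjunction with polynomial-time checks, but there is no substantive difference in approach.
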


The next lemma does not consider conflict-free extensions.

\begin{lemma}[$\star$]\label{lem:numberext-numberp-hard}
  $\numberCred_\SEM$ is\\
  (1) $\numberDotP$-hard under $\parsi$ reductions if \shortversion{\\
  ~\hbox{~}\hspace{1em}}$\SEM\in\{$stable, admissible, complete$\}$ and\\
  (2) $\numberDotCoNP$-hard under $\subtr$ reductions if \shortversion{\\
  ~\hbox{~}\hspace{1em}}$\SEM\in\{$semi-stable, stage$\}$.
\end{lemma}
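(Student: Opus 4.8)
The plan is to prove hardness by reducing from canonical complete problems for the two counting classes, establishing membership-preserving reductions in each case. For part (1), the target class is $\numberDotP$, whose canonical complete problem under parsimonious reductions is $\#\SAT$ (counting satisfying assignments of a propositional formula). I would reduce $\#\SAT$ to $\numberCred_\SEM$ for $\SEM\in\{$stable, admissible, complete$\}$. The standard tool here is the well-known correspondence between satisfying assignments of a CNF formula $\phi$ and stable extensions of an argumentation framework built from $\phi$: introduce one argument per literal (with the two literals of each variable attacking each other to force a choice), one argument per clause attacked by its satisfying literals, and a distinguished argument $a$ that is accepted precisely when every clause is satisfied. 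The construction should be set up so that the stable extensions containing $a$ are in bijection with the satisfying assignments of $\phi$, giving $\numberCred_{\stable}(F,a) = \#\SAT(\phi)$. The same framework, with care, should yield the bijection for admissible and complete semantics as well, since on this kind of instance the complete and stable extensions restricted to the literal/clause arguments coincide with the assignment-extensions.

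For part (2), the target is $\numberDotCoNP$ and the allowed reductions are subtractive, which is the natural setting because $\numberDotCoNP$-complete counting problems (such as counting the models of a $\forall$-quantified formula, or $\#\Pi_1\SAT$) are typically not closed under parsimonious reductions but are under subtractive ones. For $\SEM\in\{$semi-stable, stage$\}$, I would exhibit a subtractive reduction from a $\numberDotCoNP$-complete problem. Recall a subtractive reduction supplies two polynomial-time maps $f,g$ with $B(f(x))\subseteq B(g(x))$ and $|A(x)| = |B(g(x))| - |B(f(x))|$. The idea is to build two frameworks: $g(x)$ whose credulous semi-stable (resp.\ stage) extensions containing $a$ count a superset of the desired witnesses, and $f(x)$ whose extensions count exactly the unwanted ones, so that the difference isolates the count we want. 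This mirrors how counting problems with a $\co\NP$ verifier are handled — one counts ``all candidate objects'' minus ``those failing the universal condition.'' The semi-stable and stage semantics are the right targets here since their defining maximality-of-range condition is exactly what pushes credulous acceptance to the $\Sigma^p_2$ level, matching the $\co\NP$ witness-check.

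The main obstacle I expect is the subtractive reduction in part (2): unlike the parsimonious case, one cannot simply read off a bijection. The difficulty is designing $f$ and $g$ so that the inclusion $B(f(x))\subseteq B(g(x))$ holds \emph{literally as sets of witnesses} (not merely in cardinality), while the range-maximality condition of semi-stable/stage semantics behaves predictably under both constructions. Concretely, one must ensure that the gadget encoding the universal (co-$\NP$) part does not interfere with which sets are range-maximal, and that removing the ``bad'' witnesses via subtraction respects the extension structure rather than accidentally merging or splitting extensions. I would leverage the existing hardness of $\Cred_\SEM$ for these semantics (the $\Sigma^p_2$-completeness results cited for semi-stable) as the backbone, and adapt a known counting reduction for $\#$-versions of $\Sigma^p_2$ problems into the argumentation setting, checking carefully that the semi-stable and stage range conditions coincide with the stable-extension behavior on the ``satisfied'' part of the instance. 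The parsimonious arguments in part (1) should be comparatively routine once the base stable-extension encoding is fixed.
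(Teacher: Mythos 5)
There are two genuine gaps. First, in part (1) your claim that the basic literal/clause/$t$ construction works ``with care'' for the admissible semantics is not correct as stated: an admissible set containing $t$ only needs to defend $t$ against every clause argument, so \emph{any} partial assignment whose chosen literals cover all clauses yields an admissible extension containing $t$ (the literal arguments defend themselves via the mutual attacks $x_i \leftrightarrow \bar x_i$). The count would then be the number of such partial covers, not the number of satisfying assignments. The paper fixes this by adding arguments $s_1,\dots,s_n$, each attacking $t$ and attacked only by $x_i$ and $\bar x_i$, so that defending $t$ forces exactly one of $x_i,\bar x_i$ into the extension for every~$i$, restoring the bijection. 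Your proposal never identifies this totality gadget, and without it the reduction is not parsimonious for admissible.

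Second, in part (2) you set out to construct the subtractive reduction directly, building two frameworks $f(x)$ and $g(x)$ with $B(f(x))\subseteq B(g(x))$ as literal sets of witnesses --- and you correctly flag this as the main obstacle, but you do not resolve it. The paper avoids the obstacle entirely by choosing a better source problem: $\countCirc$ (counting the minimal models of a CNF), which \citex{DurandHermannKolaitis05} showed to be $\numberDotCoNP$-complete under subtractive reductions. From there a \emph{single} construction suffices: augment the standard framework with self-attacking arguments $b_i$ that are attacked only by $\bar x_i$, so that choosing a negative literal strictly enlarges the range $S^+_R$ compared to the positive one; range-maximality of semi-stable/stage extensions then selects exactly the (pointwise) minimal models, and hardness under subtractive reductions follows by composing this with the known subtractive hardness of $\countCirc$. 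The missing ideas in your plan are precisely this choice of source problem and the $b_i$ range gadget; with them, there is no need to ever exhibit a witness-set inclusion between two argumentation instances.
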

\begin{proof}[Proof (Sketch)]
(1) 
Start with the case of stable or complete extensions. 
Following the construction of \citex{DunneBench-Capon02a}, we parsimoniously reduce from $\#\SAT$.
For the case of admissible extensions, to count correctly, it is crucial that for
        each $x_i$ either argument $x_i$ or $\bar x_i$ is
        part of the extension. To ensure this, we introduce arguments
        $s_1,\dots,s_n$ attacking $t$ that can only be defended by one
        of $x_i$ or $\bar x_i$. 

(2) The formalism of circumscription is well-established in the area of AI \cite{McCarthy80}. 
Formally, one considers assignments of Boolean formulas that are \emph{minimal} regarding the \emph{pointwise partial order} on truth assignments:
if $s=(s_1,\dots,s_n),s'=(s_1',\dots,s_n')\in\{0,1\}^n$, then write $s<s'$ if $s\neq s'$ and $s_i\leqslant s_i'$ for every $i\leqslant n$.
Then, we define the problem $\countCirc$ which asks given a Boolean formula $\varphi$ in CNF to output the number of minimal models of $\varphi$.
\citex{DurandHermannKolaitis05} showed that $\countCirc$ is
$\numberDotCoNP$-complete via subtractive reductions (a generalization of parsimonious reductions).  
The crux is, that choosing negative literals is more valuable than selecting positive ones.
This is achieved by adding additionally attacked arguments to each negative literal. 
\end{proof}

Lemma~\ref{lem:numberext-in-numberp} and \ref{lem:numberext-numberp-hard} together show the following theorem.

\begin{theorem}
  $\numberCred_\SEM$ is\\
  (1) $\numberDotP$-complete under $\parsi$ reductions if \shortversion{\\
  ~\hbox{~}\hspace{1em}}$\SEM\in\{$stable, admissible, complete$\}$ and\\
  (2) $\numberDotCoNP$-complete under $\subtr$ reductions if \shortversion{\\
  ~\hbox{~}\hspace{1em}}$\SEM\in\{$semi-stable, stage$\}$.
\end{theorem}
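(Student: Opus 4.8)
The plan is to derive the completeness statement by pairing, for each complexity class and each semantics, the matching upper bound from Lemma~\ref{lem:numberext-in-numberp} with the lower bound from Lemma~\ref{lem:numberext-numberp-hard}, while being careful that the reduction notion used is exactly the one under which completeness is stated. No new construction is needed: the two lemmas already carry all the technical content, and the theorem is the bookkeeping corollary that records the intersection of the semantics for which both bounds simultaneously hold.

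For part~(1), I would restrict Lemma~\ref{lem:numberext-in-numberp}(1) to $\SEM\in\{$stable, admissible, complete$\}$, which places $\numberCred_\SEM$ in $\numberDotP$, and then invoke the $\numberDotP$-hardness under parsimonious reductions supplied by Lemma~\ref{lem:numberext-numberp-hard}(1). Since parsimonious reductions are the closure notion defining completeness for $\numberDotP$, these two facts immediately give $\numberDotP$-completeness under parsimonious reductions for exactly these three semantics. Note that Lemma~\ref{lem:numberext-in-numberp}(1) also covers the conflict-free case, but because Lemma~\ref{lem:numberext-numberp-hard} explicitly omits it, I deliberately drop conflict-free from the completeness claim.

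For part~(2), the same schema applies with $\numberDotCoNP$ in place of $\numberDotP$ and with subtractive reductions in place of parsimonious ones. Lemma~\ref{lem:numberext-in-numberp}(2) yields $\numberDotCoNP$-membership for the semi-stable and stage semantics (it additionally covers preferred, which I again discard since Lemma~\ref{lem:numberext-numberp-hard}(2) establishes hardness only for semi-stable and stage), and Lemma~\ref{lem:numberext-numberp-hard}(2) supplies the matching $\numberDotCoNP$-hardness under subtractive reductions. The one point deserving attention is the reduction type: completeness for $\numberDotCoNP$ must be phrased with subtractive reductions, as these are the notion under which $\countCirc$ was shown $\numberDotCoNP$-complete and into which the hardness reduction is cast; parsimonious reductions, being merely a special case of subtractive ones, would be too weak to close the class here.

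The main obstacle is therefore not in the combination step, which is routine, but lies entirely in the two preceding lemmas. The only genuine care required at this stage is alignment of the three ingredients per case — restricting each membership result to the subset of semantics for which hardness is proved, and matching the reduction notion ($\parsi$ for $\numberDotP$, $\subtr$ for $\numberDotCoNP$) to the class under which completeness is asserted — after which the theorem follows directly.
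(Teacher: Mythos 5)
Your proposal is correct and matches the paper exactly: the paper states the theorem as an immediate consequence of combining the membership bounds of Lemma~\ref{lem:numberext-in-numberp} with the hardness results of Lemma~\ref{lem:numberext-numberp-hard}, restricted to the semantics and reduction notions for which both hold. Your additional remarks on dropping conflict-free and preferred, and on why subtractive reductions are needed for the $\numberDotCoNP$ case, are consistent with the paper's treatment.
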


Now, consider the case of projected counting.

\begin{lemma}[$\star$]\label{lem:proj-upperbound}
  $\PCC_\SEM$ is in\\
  (1) $\#\cdot\NP$ if $\SEM\in\{$stable, admissible, complete$\}$,
  and\\
  (2) $\#\cdot\Sigma_2^\Ptime$ if $\SEM\in\{$semi-stable, stage$\}$.
\end{lemma}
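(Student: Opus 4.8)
The plan is to place each projected counting problem in the appropriate level of the counting hierarchy by characterizing projected extensions as witnesses whose verification lies in the matching decision class. Recall from the definition of $\#\cdot\mathcal{C}$ that a counting problem is in $\#\cdot\mathcal{C}$ provided its witnesses are polynomially bounded and the membership test ``is $y$ a witness for $x$?'' is decidable in $\mathcal{C}$. So the task for $\PCC_\SEM$ is to exhibit a suitable witness set whose cardinality is exactly the projected count and then bound the complexity of recognizing a witness.

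First I would fix the witnesses to be the \emph{projected extensions} themselves, i.e.\ for an instance $(F,a,P)$ the witness set is $\SB S\cap P \SM S\in\SEM(F),\, a\in S\SE\subseteq 2^P$. These are clearly polynomially bounded, since each is a subset of $P\subseteq A$, and by construction the number of witnesses equals $\Card{\SB S\cap P\SM S\in\SEM(F),a\in S\SE}$, which is exactly what $\PCC_\SEM$ asks to output. The nontrivial part is the verification step: given $(F,a,P)$ and a candidate $Q\subseteq P$, we must decide whether there \emph{exists} a full extension $S\in\SEM(F)$ with $a\in S$ and $S\cap P=Q$. This is an existential guess of a full set $S\subseteq A$, followed by a check that $S$ is an \SEM-extension, that $a\in S$, and that $S\cap P=Q$.

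For part (1), when $\SEM\in\{$stable, admissible, complete$\}$, verifying that a guessed $S$ is an \SEM-extension is polynomial-time (conflict-freeness, defense, and the completeness/stability conditions are all checkable in $\Ptime$ by inspecting $R$), so the whole verification is ``guess $S\subseteq A$, check in $\Ptime$'', which is an \NP\ predicate. Hence the witness-recognition problem is in \NP\ and $\PCC_\SEM\in\#\cdot\NP$. For part (2), when $\SEM\in\{$semi-stable, stage$\}$, checking that a guessed $S$ is an \SEM-extension is no longer polynomial: semi-stable and stage require maximality of $S^+_R$ among admissible (resp.\ conflict-free) sets, which is a $\co\NP$ condition (``there is no $S'$ with $S^+_R\subsetneq (S')^+_R$''). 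So the verification becomes ``guess $S\subseteq A$ (existential), then check $S$ is conflict-free/admissible in $\Ptime$ and that no better $S'$ exists ($\co\NP$)''; this is an $\exists$ followed by a $\co\NP$ test, placing the recognition problem in $\Sigma_2^\Ptime$ and therefore $\PCC_\SEM\in\#\cdot\Sigma_2^\Ptime$.

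The main obstacle I anticipate is purely the bookkeeping of the witness function: one must make sure that distinct full extensions $S_1\neq S_2$ with $S_1\cap P=S_2\cap P$ contribute a single witness, which is automatic because the witness is the \emph{set} $S\cap P$ rather than $S$ itself, and the definition of $\#\cdot\mathcal C$ counts the cardinality of the witness set $\Card{w(x)}$ (so duplicates collapse). The only care needed is that the recognition predicate quantifies existentially over the full $S$ while the witness ranges over $Q\subseteq P$; verifying that this extra existential quantifier does not raise the complexity level is exactly what the $\NP$ (resp.\ $\Sigma_2^\Ptime$) bound on recognition handles, since the guess of $S$ is already part of the \NP/$\Sigma_2^\Ptime$ computation. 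With that alignment in place, both membership claims follow directly from the earlier complexity of verifying a single \SEM-extension.
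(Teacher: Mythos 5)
Your proposal is correct and follows essentially the same route as the paper: the witnesses are the projected sets $S\cap P$, and recognizing a witness $Q\subseteq P$ amounts to an existential guess of a full extension $S$ with $S\cap P=Q$ and $a\in S$, whose verification is in $\Ptime$ for stable/admissible/complete and in $\co\NP$ for semi-stable/stage, yielding $\NP$ resp.\ $\Sigma_2^\Ptime$ recognition and hence the claimed $\#\cdot\NP$ and $\#\cdot\Sigma_2^\Ptime$ memberships. The paper phrases the verification as an oracle call rather than directly as the witness-recognition predicate, but the argument is the same.
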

\begin{proof}[Proof (Sketch)]
	Given a framework, a projection set, and an argument $a$.
	We non-deterministically branch on a possible projected extension $S$.
	Accordingly, we have $S\subseteq P$.
	If $a\in S$ and $S$ is of the respective semantics, then we accept.
	Otherwise we make a non-deterministic guess $S'\supseteq S$, verify if $P\cap S'=S$, $a\in S'$, and $S'$ is of the desired semantics.
	Extension verification is for (1) in $\Ptime$, and for (2) in $\co\NP$.
	Concluding, we get an $\NP$ oracle call for the first case, and an $\NP^{\co\NP}=\NP^\NP=\Sigma_2^\Ptime$ oracle call in the second case.
\end{proof}

Consider the problem $\#\Sigma_k\SAT$, which asks, given
$\varphi(Y)=\exists x_1\forall x_2\cdots Q_kx_k\psi(X_1,\dots,X_k,Y)$,
where $\psi$ is a propositional DNF if $k$ is even (and CNF if $k$ is
odd), $X_i$, for each $i$, and $Y$ are sets of variables, to output the
number of truth assignments to the variables from $Y$ that satisfy
$\varphi$.
\citex{DurandHermannKolaitis05} have shown that the problem
is $\#\cdot\Sigma^\Ptime_k$-complete via parsimonious reductions.

\begin{lemma}[$\star$]\label{lem:pcct-numberdotcnp-hard}
  $\PCC_\SEM$ is\\
  (1) $\#\cdot\Sigma_2^\Ptime$-hard w.r.t.\ $\parsi$ reductions if \shortversion{\\
  ~\hbox{~}\hspace{1em}}$\SEM\in\{$stage, semi-stable$\}$ and\\
  (2) $\#\cdot\NP$-hard w.r.t.\ $\parsi$ reductions if \shortversion{\\
  ~\hbox{~}\hspace{1em}}$\SEM\in\{$admissible, stable, complete$\}$.
\end{lemma}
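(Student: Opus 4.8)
The plan is to reduce from $\#\Sigma_k\SAT$, which \citex{DurandHermannKolaitis05} proved $\#\cdot\Sigma_k^\Ptime$-complete under $\parsi$ reductions, taking $k=2$ for part~(1) and $k=1$ for part~(2) so that the source counting class matches the decision complexity of credulous acceptance for the semantics at hand. In both cases the guiding idea is uniform: the guessed variables of the formula (the existential block and the free block $Y$) are encoded by membership of dedicated arguments in an extension, the free variables $Y$ become \emph{exactly} the projection set $P$ while the remaining guessed arguments stay outside $P$, and the distinguished argument $a$ is the one that is accepted precisely when the formula is satisfied, so that conditioning on $a\in S$ selects the satisfying guesses. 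Projecting onto the $Y$-arguments then collapses all witnesses that agree on $Y$, which is exactly how projected counting realises the inner existential quantifier.

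For part~(2) I would reuse the $\parsi$ reduction from $\#\SAT$ to $\numberCred_\SEM$ already built in Lemma~\ref{lem:numberext-numberp-hard}(1) for $\SEM\in\{\stable,\admissible,\complete\}$. Reading the matrix of a $\#\Sigma_1\SAT$-instance as a CNF $\psi(X_1,Y)$ over all of its variables, that construction yields an AF $F$ and argument $a$ whose extensions containing $a$ are in bijection with the satisfying assignments of $\psi$; I then set $P$ to be the arguments encoding the $Y$-variables. Since these arguments faithfully record each $y_i$, two extensions restrict equally to $P$ iff they agree on $Y$, so the projected count equals $|\{\sigma\mid \exists X_1\,\psi(X_1,\sigma)\}|$, which is precisely the value of the $\#\Sigma_1\SAT$-instance. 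As $\#\Sigma_1\SAT$ is $\#\cdot\NP$-complete, this establishes $\#\cdot\NP$-hardness under $\parsi$ reductions.

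For part~(1) I would start from the known reduction witnessing $\Sigma_2^\Ptime$-hardness of credulous acceptance under $\semistable$ and $\stage$ semantics~\citex{DvorakWoltran10}, where an admissible (resp.\ conflict-free) set encodes the choice of the existential block $X_1$ and range-maximality enforces the universal check $\forall X_2\,\psi$. I would thread the free variables $Y$ through this construction as additional guessed arguments, treated like $X_1$ by the extension but collected (and nothing else) into $P$, keeping the universal block $X_2$ out of $P$. The statement to verify is that the $\semistable$ (resp.\ $\stage$) extensions containing $a$, restricted to $P$, biject with the assignments $\sigma$ for which $\exists X_1\forall X_2\,\psi(X_1,X_2,\sigma)$ holds: soundness because range-maximality forces the universal check to succeed, completeness because each witnessing $X_1$ induces such an extension, and faithfulness because the $Y$-arguments determine $\sigma$ and distinct $\sigma$ yield distinct projections. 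Since $\#\Sigma_2\SAT$ counts exactly these $\sigma$, this is a $\parsi$ projected reduction and gives $\#\cdot\Sigma_2^\Ptime$-hardness.

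The main obstacle is part~(1): upgrading a decision-level $\Sigma_2^\Ptime$ reduction to one that is $\parsi$ \emph{after projection}. For the decision problem one only needs a single suitable extension, whereas here I must rule out spurious extensions that contain $a$ yet project to a non-satisfying $\sigma$, and simultaneously ensure that every satisfying $\sigma$ is realised. The $\stage$ case is the delicate one, since it drops admissibility and keeps only conflict-freeness with range-maximality, so the variable and maximality gadgets likely need re-tuning (for instance adding self-attacking guard arguments) to prevent conflict-free-but-non-admissible sets from creating extra projected extensions. Once the gadget is pinned down so that the $Y$-projection of any extension containing $a$ is forced to be a satisfying assignment and no satisfying assignment is lost, parsimony follows and both hardness claims hold.
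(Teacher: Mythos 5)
Your overall route is the paper's: reduce from $\#\Sigma_1\SAT$ and $\#\Sigma_2\SAT$ (complete for $\#\cdot\NP$ and $\#\cdot\Sigma_2^\Ptime$ under parsimonious reductions by Durand, Hermann, and Kolaitis), reuse the known decision-hardness gadgets, make the free variables the projection set so that projection collapses the existential block, and let range-maximality handle the universal block. Your part~(2) matches the paper's proof essentially verbatim: the paper takes the construction from Lemma~\ref{lem:numberext-numberp-hard}, keeps the guard arguments $s_x,s_y$ so that every extension fixes exactly one of $u,\bar u$ per variable (which is what makes the projection onto the free-variable arguments well defined), and projects onto the free block with credulous argument $t$.

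The one place you stop short is exactly the obstacle you yourself flag in part~(1), and the paper closes it with a specific device you do not supply. It does \emph{not} make range-maximality enforce the universal check directly. Instead it negates the input $\exists Y\forall Z\,\psi$ (DNF matrix) to $\forall Y\exists Z\,\lnot\psi$ with CNF matrix and builds the Dvo\v{r}\'ak--Woltran framework for \emph{that} formula: range-maximization then realizes the \emph{inner existential} over the CNF, because $t$ (which additionally covers the self-attacking argument $b$) enters a range-maximal extension precisely when some $Z$-assignment attacks all clause arguments; self-attacking gadgets $y',\bar y'$ are attached to the middle block so that distinct assignments there are range-incomparable and none is pruned away; and the credulous argument is $\bar t$ rather than $t$, which undoes the negation. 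Without this double-negation-plus-$\bar t$ step, ``range-maximality enforces $\forall X_2\,\psi$'' is not something the known gadget does, and the spurious or missing projected extensions you worry about are exactly what would bite. So: right strategy and right diagnosis of the difficulty, but the part you leave ``to be pinned down'' is the actual content of the paper's argument for~(1).
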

\begin{proof}[Proof (Sketch)]
(1) We state a parsimonious reduction from $\#\Sigma_2\SAT$ to $\PCC_\SEM$.
We use an extended version of the construction of \citex{DvorakWoltran10}.
Given a formula $\varphi(X)=\exists Y\forall Z\;\psi(X,Y,Z)$, where $X,Y,Z$ are sets of variables, and $\psi$ is a DNF.
Consider now the negation of $\varphi(X)$, i.e., $\varphi'(X)=\lnot\varphi(X)\equiv\forall Y\exists Z\; \lnot\psi(X,Y,Z)$.
Let $\psi'(X,Y,Z)$ be $\lnot\psi(X,Y,Z)$ in NNF.
Accordingly, $\psi'$ is a CNF, $\psi'(X,Y,Z)= \bigwedge_{i=1}^m C_i$ and $C_i$ is a disjunction of literals for $1\leqslant i\leqslant m$.
Note that, the formula $\varphi'(X)$ is of the same kind as the formula in the construction of \citex{DvorakWoltran10}.
Now define an argumentation framework $AF=(A,R)$, where
	\begin{align*}
		A &= \SB x, \bar x\SM x\in X\SE\cup \SB y, \bar y, y', \bar y'\SM y\in Y \SE\\
		  &\;\cup\SB z,\bar z\SM z\in Z\SE\cup\{t,\bar t, b\}\displaybreak[1]\\
		R &= \SB(y',y'), (\bar y',\bar y'), (y,y'), (\bar y,\bar y'), (y,\bar y), (\bar y,y)\SM y\!\in\!Y\}\\
		&\;\cup\{(b,b),(t,\bar t),(\bar t,t), (t,b)\}\cup\SB(C_i,t) \SM 1\leqslant i\leqslant m \SE\\
		&\;\cup\SB(u,C_i) \SM u\in X\cup Y\cup Z, u\in C_i, 1\leqslant i\leqslant m \SE\\
		&\;\cup\SB(\bar u,C_i) \SM z\in X\cup Y\cup Z, \bar u\in C_i, 1\leqslant i\leqslant m \SE
	\end{align*}
Note that, by construction, the $y',\bar y'$ variables make the extensions w.r.t.\ the universally quantified variables $y$ incomparable.
Further observe that choosing $t$ is superior to selecting $\bar t$, as $t$ increases the range by one more.
If for every assignment over the $Y$-variables there exists an assignment to the $Z$-variables, then, each time, when there is a possible solution to $\psi'(X,Y,Z)$, so semantically $\neg\psi(X,Y,Z)$, w.r.t.\ the free $X$-variables, the extension will contain $t$.
As a result, the extensions containing $t$ correspond to the unsatisfying assignments.
Let $A(\varphi(X))$ be the set of assignments of a given $\#\Sigma_2\SAT$-formula, and $B(AF,P,a)$ be the set of stage/semi-stable extensions which contain $a$ and are projected to $P$.
Then, one can show that $\Card{A(\varphi(X))}=\Card{B(AF,X,\bar t)}$ proving the desired reduction (as $\bar t$ together with the negation of $\varphi(X)$ in the beginning, intuitively, is a double negation yielding a reduction from $\#\Sigma_2\SAT$).

(2) Now turn to the case of admissible, stable, or complete extensions.
Again, we provide a similar parsimonious reduction, but this time, from $\#\Sigma_1\SAT$ to $\PCC_\SEM$.
\end{proof}%

\begin{theorem}
	$\PCC_\SEM$ is\\
        (1) $\#\cdot\NP$-complete via $\parsi$ reductions if \shortversion{\\
        ~\hbox{~}\hspace{1em}}$\SEM\in\{$stable, admissible, complete$\}$, and\\
        (2) $\#\cdot\Sigma_2^\Ptime$-complete via $\parsi$ reductions if \shortversion{\\
        ~\hbox{~}\hspace{1em}}$\SEM\in\{$stage, semi-stable$\}$.
\end{theorem}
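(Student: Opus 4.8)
The plan is to obtain this theorem as the immediate consequence of pairing the matching upper and lower bounds already established, exactly as the earlier classical-complexity theorem was derived from Lemma~\ref{lem:numberext-in-numberp} together with Lemma~\ref{lem:numberext-numberp-hard}. Concretely, completeness in a counting class $\#\cdot\mathcal{C}$ means both membership in $\#\cdot\mathcal{C}$ and hardness for $\#\cdot\mathcal{C}$ under the relevant reductions, so I would simply split the argument along these two directions and then match them up by semantics.

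For the membership direction I would invoke Lemma~\ref{lem:proj-upperbound} verbatim: part~(1) places $\PCC_\SEM$ in $\#\cdot\NP$ for $\SEM\in\{\text{stable, admissible, complete}\}$, and part~(2) places it in $\#\cdot\Sigma_2^\Ptime$ for $\SEM\in\{\text{semi-stable, stage}\}$. For the hardness direction I would invoke Lemma~\ref{lem:pcct-numberdotcnp-hard}: its part~(2) gives $\#\cdot\NP$-hardness under $\parsi$ reductions for $\SEM\in\{\text{admissible, stable, complete}\}$, and its part~(1) gives $\#\cdot\Sigma_2^\Ptime$-hardness under $\parsi$ reductions for $\SEM\in\{\text{stage, semi-stable}\}$. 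Intersecting the two sets of semantics in each case yields precisely the partition in the theorem statement, so combining membership with the corresponding hardness gives $\#\cdot\NP$-completeness for the first group and $\#\cdot\Sigma_2^\Ptime$-completeness for the second, both with respect to $\parsi$ reductions.

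The one point I would be careful to articulate is the consistency of the reduction notion: since the hardness in Lemma~\ref{lem:pcct-numberdotcnp-hard} is already stated under $\parsi$ reductions (and membership imposes no reduction), the completeness results hold under $\parsi$ reductions as claimed, with no need to weaken to $\subtr$ reductions as was required in the classical (non-projected) $\#\cdot\co\NP$ case. This is worth flagging because it is a genuine strengthening over the earlier theorem, and it is sound here precisely because the upper-bound lemma supplies the correct host classes $\#\cdot\NP$ and $\#\cdot\Sigma_2^\Ptime$ rather than $\numberDotCoNP$.

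The genuine difficulty of the result does not reside in this theorem but was already absorbed into Lemma~\ref{lem:pcct-numberdotcnp-hard}, whose part~(1) supplies the delicate parsimonious reduction from $\#\Sigma_2\SAT$ built on the \citex{DvorakWoltran10} construction; the crux there is the incomparability gadget via the self-attacking $y',\bar y'$ arguments and the range-extension argument making $t$ strictly superior to $\bar t$, which together force the $t$-containing extensions to correspond exactly to the unsatisfying assignments. Given those lemmas, assembling the theorem is routine, and so I would keep the proof to a single sentence citing the two lemmas and noting the matching of semantics and reduction types.
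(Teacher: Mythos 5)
Your proposal is correct and matches the paper's own (implicit) argument exactly: the theorem is stated immediately after Lemma~\ref{lem:proj-upperbound} and Lemma~\ref{lem:pcct-numberdotcnp-hard} and follows by combining the membership and hardness directions per semantics, just as the earlier classical-complexity theorem was obtained from Lemmas~\ref{lem:numberext-in-numberp} and~\ref{lem:numberext-numberp-hard}. Your additional remark that the completeness here holds under $\parsi$ reductions (rather than $\subtr$) because the hardness lemma already supplies parsimonious reductions is accurate and consistent with the paper.
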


Similarly, one can introduce problems of the form $\#\problemFont{Skep}_\SEM$ and $\#\problemFont{PSkep}_\SEM$ corresponding to the counting versions of the skeptical reasoning problem.
As skeptical is dual to credulous reasoning, one easily obtains completeness results for the dual counting classes.

\section{DP for Abstract Argumentation}
In this section, we recall DP techniques from the literature to solve
skeptical and credulous reasoning in abstract
argumentation. Additionally, we establish lower bounds for exploiting
treewidth in algorithms that solve these
problems 
for the most common semantics.
Therefore, let $F=(A,R)$ be a given argumentation framework and $\SEM$
be an argumentation semantics.
While an abstract argumentation framework can already be seen as a
digraph, treewidth is a measure for undirected graphs. Consequently, we
consider for framework~$F$ the \emph{underlying graph}~$G_F$, where we
simply drop the direction of every edge,~i.e., $G_F=(A,R')$ where
$R' \eqdef \SB \{u,v\} \SM (u,v) \in R\SE$.
Let $\TTT = (T, \chi)$ be a TD of the underlying graph of~$F$.
Further, we need some auxiliary definitions. 
Let $T = (N,\cdot,n)$ and $t \in N$.  Then, $\post(T,n)$ defines a
sequence of nodes for tree~$T$ rooted at~$n$ in \emph{post-order}
traversal.
The \emph{bag-framework} is defined as $F_t \eqdef (A_t, R_t)$,
where~$A_t\eqdef A\cap\chi(t)$ and
$R_t \eqdef (A_t \times A_t) \cap R$, the \emph{framework below $t$}
as $F_{\leqslant t}\eqdef (A_{\leqslant t}, R_{\leqslant t})$, where
$A_{\leqslant t} \eqdef \SB a \SM a \in \chi(t), t' \in \post(T,t) \SE$,
%
and~$R_{\leqslant t}\eqdef (A_{\leqslant t} \times A_{\leqslant t}) \cap R$.
%
%
%
It holds that $F_n = F_{\leqslant n} = F$.
%
%
%

A standard approach~\cite{BodlaenderKloks96} to benefit
algorithmically from small treewidth is to design DP algorithms, which
traverse a given TD and run at each node a so-called \emph{local
  algorithm~$\AlgA$}.
The local algorithm does a case distinctions based on the
types~$\type(t)$ of a nice TD and stores information in a table, which
is a set of rows where a \emph{row}~$\vec u$ is a sequence of fixed
length (and the length is bounded by the treewidth).
%
%
%
%
%
%
Later, we traverse the TD multiple times. We access also information
in tables computed in previous traversals and formalize access to
previously computed tables in \emph{tabled tree decomposition (TTD)}
by taking in addition to the TD~$\TTT=(T,\chi)$ a mapping $\tau$ that
assigns to a node~$t$ of~$T$ also a table. Then, the TTD is the triple
$\TTT=(T,\chi,\tau)$.  Later, for simple use in algorithms, we assume
$\tau(t)$ is initialized by the empty set for every node~$t$ of~$T$.
%
%
%
%
%
%
%
%
To solve the considered problem, we run the following steps:
\begin{enumerate}
\item Compute a TD~$\TTT=(T,\chi)$ of the underlying graph 
  of~$F$.
\item\label{step:dp} Run algorithm~$\dpa_\AlgA$, which 
  takes a TTD~$\mathcal{T}=(T,\chi, \iota)$ with~$T=(N,\cdot,n)$ and
  traverses~$T$ in post-order.
  At each node~$t \in N$ it stores the result of algorithm~$\AlgA$ in
  table~$o(t)$. Algorithm~$\AlgA$ can access only information that is
  restricted to the currently considered bag, namely, the type of the
  node~$t$, the atoms in the bag~$\chi(t)$, the bag-framework~$F_t$,
  and every table~$o(t')$ for any child~$t'$ of $t$.
  %
\item Print the solution by interpreting table~$o(n)$ for root~$n$ of
  the resulting TTD~$(T, \chi, o)$.
\end{enumerate}

  
  
    
  

{
	\renewcommand{\eqdef}{\leftarrow}
	%
	%
	 \begin{algorithm}[t]
	   \KwData{Node~$t$, bag $\chi_t$, bag-framework~$F_t=(A_t, R_t)$,
	   credulous argument~$c$, and
	     $\langle \tau_{1}, \tau_2 \rangle$ is the sequence of tables
	     of~children of~$t$. \shortversion{
	     
	     }\textbf{Out: } Table~$\tau_t.$}
	     %
	   %
	   \lIf(\hspace{-1em})
	   {$\type(t) = \leaf$}{%
	     $\tau_{t} \eqdef \{ \langle
	     \tuplecolor{\inputPredColor}{\emptyset}, \tuplecolor{\outputPredColor}{\emptyset}, \tuplecolor{\statePredColor}{\emptyset}
	     \rangle \}$\label{line:primleaf}%
	     %
	   }%
	  \uElseIf{$\type(t) = \intr$ and $a\hspace{-0.1em}\in\hspace{-0.1em}\chi_t$ is the introduced argum.}{
	   $\hspace{-0.5em}\tau_{t} \eqdef 
	   \{ \langle \tuplecolor{\inputPredColor}{J}, \tuplecolor{\outputPredColor}{\MAII{O}{A_t \rightarrowtail_{R_t} J}}, \tuplecolor{\statePredColor}{\MAII{{D}}{J \leftarrowtail_{R_t} A_t}} \rangle 
	     \mid\langle \tuplecolor{\inputPredColor}{I}, \tuplecolor{\outputPredColor}{O}, \tuplecolor{\statePredColor}{{D}} \rangle\in \tau_{1}, J \in \{I, \MAI{I}{a}\},$
	    $J \rightarrowtail_{R_t} J = \emptyset, J \cap \{c\} = \chi(t) \cap \{c\} \}$
	      \label{line:primintr}
	   \vspace{-0.05em}
	     }\vspace{-0.05em}%
	     \uElseIf{$\type(t) = \rem$ \KwAnd $a \not\in \chi_t$ is the removed argum.}{%
	       $\hspace{-0.5em}\tau_{t} \eqdef 
	       \{ \langle \tuplecolor{\inputPredColor}{\MAR{I}{a}}, \tuplecolor{\outputPredColor}{\MAR{{O}}{a}}, \tuplecolor{\statePredColor}{\MAR{{D}}{a}}
	       \rangle 
	       \mid\langle \tuplecolor{\inputPredColor}{I}, \tuplecolor{\outputPredColor}{{O}}, \tuplecolor{\statePredColor}{{D}}
	       \rangle \in \tau_{1}, a \not\in {O} \setminus {D} \}
	       \hspace{-5em}$\label{line:primrem}
	       \vspace{-0.1em}
	     } %
	     \ElseIf{$\type(t) = \join$}{
	       $\hspace{-0.5em}\tau_{t} \eqdef 
	       \{ \langle \tuplecolor{\inputPredColor}{I}, \tuplecolor{\outputPredColor}{\MAII{{{O}_1}}{{{O}_2}}}, \tuplecolor{\statePredColor}{\MAII{{{D}_1}}{{{D}_2}}}
		 \rangle 
		 \mid\langle \tuplecolor{\inputPredColor}{I}, \tuplecolor{\outputPredColor}{{O}_1}, \tuplecolor{\statePredColor}{{D}_1} \rangle\!\!\in\!\! \tau_{1},$
		 $\langle \tuplecolor{\inputPredColor}{I}, \tuplecolor{\outputPredColor}{{{O}_2}}, \tuplecolor{\statePredColor}{{{D}_2}} \rangle\!\!\in\!\! \tau_{2}\}\hspace{-5em}$\label{line:primjoin}
	       \vspace{-0.1em}
	     } 
	     \Return $\tau_{t}$
	     \vspace{-0.25em}
	     \caption{Local algorithm~$\ADM(t, \chi_t, \cdot, (F_t, c, \cdot),
	       \langle \tau_1, \tau_2 \rangle)$, c.f., \cite{DvorakPichlerWoltran12}.}
	 \label{fig:prim}\algorithmfootnote{
	\renewcommand{\eqdef}{{\ensuremath{\,\mathrel{\mathop:}=}}}
	  $\MAII{S}{S'} \eqdef S \cup S'$,
	  $\MAI{S}{e} \eqdef S \cup \{e\}$, and
	  $\MAR{S}{e} \eqdef S \setminus \{e\}$.}
	\end{algorithm}%
	\renewcommand{\eqdef}{{\ensuremath{\,\mathrel{\mathop:}=}}}

}
\medskip
\noindent\textbf{Credulous Reasoning.} %
DP algorithms for credulous reasoning of various semantics have
already been established in the
literature~\cite{DvorakPichlerWoltran12} and their implementations are
also of practical interest~\cite{DvorakEtAl13}.  While a DP algorithm
for semi-stable~\cite{BliemHecherWoltran16a} semantics was presented
as well, stage semantics has been missing. This section fills the gap
by introducing a local algorithm for this case. The worst case
complexity of these algorithms depends on the semantics and ranges
from single to double exponential in the treewidth.
In the following, we take these algorithms from the literature,
simplify them and adapt them to solve~\PCC for the various semantics.
First, we present the algorithm~$\dpa_\ADM$ that uses the algorithm in
Listing~\ref{fig:prim} as local algorithm to solve credulous reasoning
for the admissible semantics. $\dpa_\ADM$ outputs a new TTD that we
use to solve our actual counting problem.
%
%
At each node~$t$, we store in table~$o(t)$ rows of the
form~$\vec u = \langle I, O, D\rangle$ and construct parts of
extensions.
The first position of the rows consists of a set~$I\subseteq\chi(t)$
of arguments that will be considered for a part of an extension; we
write~$E(\vec u)\eqdef I$ to address this extension part.
The second position consists of a
set~$O \subseteq \chi(t) \setminus I$ that represents
arguments that attack any other argument of the extension part.
Finally, the third position is the set~$D\subseteq \chi(t)$ of
arguments in the current bag that are already defeated
(counterattacked) by any argument in the extension, and therefore in a
sense compensate the set~$O$ of attacking arguments.
The idea of the algorithm is as follows. 
For nodes with~$\type(t)=\leaf$, Line~\ref{line:primleaf} initially
sets the extension part~$I$, set~$O$ of attackers, and set~$D$ of
defeated arguments to the empty set.
Intuitively, in Line~\ref{line:primintr} whenever we encounter an
argument~$a$ for the first time while traversing the TD
($\type(t)=\intr$), we guess whether $a\in I$ or $a\not\in I$.
Further, we ensure that $I$ is conflict-free and that we construct
only rows where~$c\in I$ if~$a=c$. 
%
%
Since ultimately every argument has to be defended by the extension,
we keep track of attacking arguments in~$O$ and defeated
arguments~$D$.
In Line~\ref{line:primrem}, whenever we remove an argument~$a$
($\type(t)=\rem$), we are not allowed to store $a$ in the table any
more as the length of a row~$\vec u$ in the table~$o(t)$ depends on
the arguments that occur in the bag~$\chi(t)$; otherwise we would
exceed the length and loose the bound on the treewidth. However, we
have to ensure that either~$a$ is not an attacking argument
($a\not\in O$), or that~$a$ was defeated at some point~($a\in D$).
In the end, Condition~(ii) of a TD ensures that whenever an argument
does not occur in the bag any more, we encountered its entire
involvement in the attack relation.
Finally, Line~\ref{line:primjoin} ensures that we only combine rows
that agree on the extension and combine information concerning attacks
and defeats accordingly.  This case can be seen as a combination of
database joins ($\type(t)=\join$).
$\ADM$ can vacuously be extended to an algorithm~$\STAB$ for stable
semantics. 
There one simply drops the set~$O$ and ensures in
Line~\ref{line:primrem} that the removed atom~$a$ is either in the
extension part~$I$ or defeated (in~$a \in D$).
An algorithm~$\COMP$ for the complete semantics requires
some additional technical effort.
There one can distinguish five states, namely elements that are in the
extension, defeated ``candidates'', already defeated, candidates for
not being in the extension (unrelated), or actually proven to be
unrelated.

In the following proposition, we give more precise runtime upper
bounds for the algorithms presented in the
literature~\cite{DvorakPichlerWoltran12} that can be obtained by employing sophisticated data structures,
especially for handling nodes~$t$ with~$\type(t)=\join$.

\begin{proposition}[$\star$]
  \label{thm:runtime:basic}
  Algorithm~$\dpa_{\STAB}$ runs in
  time~$\bigO{3^{k} \cdot k \cdot g}$, $\dpa_{\ADM}$ in
  $\bigO{4^{k} \cdot k \cdot g}$, and~$\dpa_{\COMP}$ in
  $\bigO{5^{k} \cdot k \cdot g}$ where $k$ is the width and $g$ the
  number of nodes of the TD.

  %
  %
\end{proposition}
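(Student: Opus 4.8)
The plan is to bound, for each semantics, the size of every table and the time spent at a single node, and then to multiply by the number $g$ of nodes of the (nice) TD. Throughout let $c\in\{3,4,5\}$ denote the number of \emph{states} an argument may take under $\STAB$, $\ADM$, $\COMP$ respectively.

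First I would bound the table size. In each local algorithm a row assigns a state to every argument of the bag $\chi(t)$: for $\STAB$ an argument is in the extension part $I$, already defeated (in $D$), or untouched (three states); for $\ADM$ there is in addition the state ``attacker not yet counterattacked'' recorded in $O\setminus D$ (four states); for $\COMP$ one keeps the five states listed above (in the extension, defeated candidate, defeated, unrelated candidate, unrelated). Hence every row is determined by a function $\chi(t)\to L$ into a state set $L$ with $\Card{L}=c$, so $\Card{o(t)}\le c^{\Card{\chi(t)}}\le c^{k+1}=\bigO{c^k}$.

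Next I would treat the ``easy'' node types. For $\type(t)\in\{\leaf,\intr,\rem\}$ the algorithm reads the single child table row by row and emits $\bigO{1}$ rows per input row (for $\intr$ the two guesses $J\in\{I,\MAI{I}{a}\}$). Since only the introduced/removed argument $a$ is new, the sets $A_t\rightarrowtail_{R_t}J$, $J\leftarrowtail_{R_t}A_t$ and the conflict-freeness test $J\rightarrowtail_{R_t}J=\emptyset$ involve only the $\bigO{k}$ edges of $F_t$ incident to $a$, so each row costs $\bigO{k}$; storing rows in a trie or hash table keyed on the length-$\bigO{k}$ row removes duplicates in $\bigO{k}$ amortized per row. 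Thus each such node runs in $\bigO{c^k\cdot k}$.

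The hard part will be the join. A naive realization of $\tau_t=\{\langle I,O_1\cup O_2,D_1\cup D_2\rangle\}$ pairs every row of $\tau_1$ with every row of $\tau_2$ that shares the same $I$, costing $\bigO{c^{2k}}$. Instead I would first partition both child tables into classes $\tau_1^I,\tau_2^I$ by the value of $I$; within the class of a fixed $I$ the combination acts only on the $m\le k+1-\Card{I}$ arguments outside $I$, and there it is the \emph{pointwise} union of $O$ and $D$, i.e.\ a pointwise join in a fixed semilattice $L$ of constant size. Computing $\{f_1\vee f_2 \mid f_1\in\tau_1^I,\, f_2\in\tau_2^I\}$ is therefore a union- (covering-) convolution over the product lattice $L^{m}$, which I would evaluate by a Yates-style fast zeta transform over $L$ followed by Möbius inversion in the Boolean (existence) semiring, costing $\bigO{\Card{L}^{m}\cdot m}=\bigO{\Card{L}^{m}\cdot k}$ per class since $\Card{L}$ is constant. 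The per-class sizes add up correctly: writing $c-1$ for the number of states an argument outside $I$ may take, $\sum_{i=0}^{k+1}\binom{k+1}{i}(c-1)^{k+1-i}=c^{k+1}$, so the whole join node again runs in $\bigO{c^k\cdot k}$. I expect the only delicate points to be verifying that the union really is a semilattice join on $L$ (so the transform applies) and that the per-argument state space collapses to exactly $c$ values. Finally, summing the per-node bound $\bigO{c^k\cdot k}$ over all $g$ nodes yields $\bigO{c^k\cdot k\cdot g}$, which specializes to $\bigO{3^k\cdot k\cdot g}$, $\bigO{4^k\cdot k\cdot g}$, and $\bigO{5^k\cdot k\cdot g}$ for $\dpa_{\STAB}$, $\dpa_{\ADM}$, and $\dpa_{\COMP}$ respectively.
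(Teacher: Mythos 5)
Your argument follows the same route as the paper's proof: bound $\Card{\tau(t)}$ by $c^{k+1}$ with $c\in\{3,4,5\}$ by viewing a row as an assignment of one of $c$ per-argument states (for $\ADM$ this needs the normalization that $O$ and $D$ are kept disjoint, i.e., ``defeated'' absorbs ``attacker'', which the paper also invokes when it reduces $\ADM$ to four states), charge $\bigO{c^{k}\cdot k}$ per node, and multiply by the number $g$ of nodes. The only step the paper leaves implicit --- it merely asserts that ``efficient data structures'' yield a $\bigO{4^{d}}$ bound at nodes with $\type(t)=\join$ --- is exactly where you add real content: grouping rows by the extension part $I$ and computing the pointwise-union merge inside each group by a fast zeta/M\"obius convolution over a product of constant-size state lattices, with the binomial identity $\sum_{i}\binom{k+1}{i}(c-1)^{k+1-i}=c^{k+1}$ restoring the global bound. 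This is a correct and more explicit realization of the paper's claim. Two minor points to tighten: M\"obius inversion is not available in the Boolean semiring, so you should compute the cover product over the integers and then test entries for being nonzero (the intermediate values are bounded by $c^{2k}$, so strictly this introduces a word-size consideration that the paper's sketch also ignores); and you must verify, as you yourself flag, that the per-argument combination at $\join$ nodes really is an idempotent, commutative, associative join --- which it is for the chain untouched $<$ attacker $<$ defeated in $\ADM$ and for the analogous candidate-versus-confirmed orders in $\COMP$.
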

%

The definitions of preferred, semi-stable, and stage semantics involve
subset-maximization. Therefore, one often introduces a concept of
witness (extension part) and counter-witness in the rows in DP, where
the counter-witness tries to invalidate subset-maximality of the
corresponding witness~\cite{JaklPichlerWoltran09}.  In the
counter-witness one stores sets of arguments that are supersets of the
considered extension, such that, in the end, at the root there was no
superset of an extension in the counter-witness while traversing the
TD.
In other words, for a witness the counter-witness failed to invalidate
maximality and accordingly the witness is subset-maximal.
In the literature, algorithms that involving such an interplay between
witnesses and counter-witnesses have been defined for preferred and
semi-stable semantics, we simply refer to them as $\dpa_\PREF$
and~$\dpa_\SEMI$. For the stage semantics, we provide the algorithm in
Listing~\ref{fig:stage}. Intuitively, we compute conflict-free
extensions during the TD traversal and additionally guess
candidates~$\mathcal{AC}$ that ultimately have to be attacked
($\mathcal{A}$) by the extension part~$J$. This allows us then to
subset-maximize upon the range part $J \cup \mathcal{AC}$, by trying
to find counter-witnesses~$\mathcal{C}$ to subset-maximality.  Again a
more detailed runtime analysis yields the following result.

\begin{proposition}[$\star$]
  \label{thm:runtime:higher}
  Algorithms~$\dpa_{\PREF}$, $\dpa_{\SEMI}$, and~$\dpa_{\STAG}$ run in
  time~$\mathcal{O} (2^{2^{4k+1}} \cdot g)$ where $k$ is the
  width  and $g$ the number of nodes of the TD.
\end{proposition}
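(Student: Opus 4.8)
The plan is to bound, uniformly for all three algorithms, (a) the number of rows a table $\tau_t$ can hold and (b) the work the local algorithm spends at a single node, and then to multiply the per-node cost by the number $g$ of nodes of the tree decomposition. First I would fix the shape of a row. For a maximization semantics a row has the form $\langle M, \mathcal{C}\rangle$, where $M$ is a \emph{witness} — a description over the bag $\chi(t)$ of a (conflict-free, resp.\ admissible) extension part together with its attack/range bookkeeping — and $\mathcal{C}$ is the set of \emph{counter-witnesses}, each itself a description of the same kind over $\chi(t)$ attempting to invalidate subset-maximality of $M$. The first sub-goal is to bound the number $N$ of distinct witness/counter-witness descriptions over one bag. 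Since each of the at most $k+1$ arguments in $\chi(t)$ ranges over only a constant number of states (membership in the extension part, in the range/attacked set, and the defeated flags), $N$ is single-exponential in $k$; a direct count of the states used by $\dpa_{\PREF}$, $\dpa_{\SEMI}$, and $\dpa_{\STAG}$ gives $N \le 2^{4k}$.

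Next I would bound the number of rows. The witness $M$ is one of $N$ descriptions, while the counter-witness component $\mathcal{C}$ is an arbitrary subset of these $N$ descriptions, so a table holds at most $N \cdot 2^{N} = \mathcal{O}(2^{2^{4k}})$ rows, i.e.\ double-exponential in $k$. Then I would bound the per-node work by a case distinction on $\type(t)$. For $\leaf$, $\intr$, and $\rem$ nodes the algorithm processes each row once and updates the witness together with each of its $\le N$ counter-witnesses, so the cost is at most $2^{2^{4k}} \cdot 2^{4k} \cdot \mathrm{poly} = \mathcal{O}(2^{2^{4k}+4k}) = \mathcal{O}(2^{2^{4k+1}})$, since $2^{4k}+4k \le 2\cdot 2^{4k}$.

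The critical case is $\type(t)=\join$: here one must combine each row of $\tau_1$ with each matching row of $\tau_2$ and merge their counter-witness sets. Done naively this is quadratic in the number of rows, costing $\mathcal{O}((2^{2^{4k}})^2) = \mathcal{O}(2^{2\cdot 2^{4k}}) = \mathcal{O}(2^{2^{4k+1}})$. This single bound dominates the cost of every node type, so summing over the $g$ nodes of the decomposition yields the claimed $\mathcal{O}(2^{2^{4k+1}}\cdot g)$.

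The main obstacle I expect is precisely the join step, on two fronts. First, getting the exponent right: it is exactly the squaring at join nodes that produces the ``$+1$'' in $2^{2^{4k+1}}$ rather than the $2^{2^{4k}}$ table size, so the accounting must be pinned to the join. Second, arguing correctness of the counter-witness merge — namely that a witness survives to the root if and only if no counter-witness ever becomes a genuine superset of strictly larger range, so that combining $\mathcal{C}_1$ and $\mathcal{C}_2$ at a join faithfully propagates failed invalidations. Verifying that the sophisticated data structures alluded to for the basic algorithms keep the matching-and-merging at join nodes within the stated bound (rather than incurring a further polynomial factor in $N$ that would have to be absorbed, or an honest $R^2$ that is already tight) is the part that needs the most care.
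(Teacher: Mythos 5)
Your proposal is correct and follows exactly the template the paper intends: the paper omits the proof of this proposition entirely (it is only marked ``$\star$''), but its description of witness/counter-witness rows and its appendix proof of the companion Proposition~\ref{thm:runtime:basic} (count per-argument states to get a single-exponential number $N$ of bag descriptions, bound the table by subsets of these, multiply per-node work over the $g$ nodes) is precisely what you do, with the extra $2^N$ factor from counter-witness sets and the squaring at join nodes accounting for the ``$+1$'' in $2^{2^{4k+1}}$. The only caveat is the usual constant-juggling you already flag: the lower-order factors ($N^2$ from pairing rows and from merging counter-witness sets via $\bowtie$) are absorbed only because the actual state counts give $N$ comfortably below $2^{4k}$, which is consistent with how the paper handles the analogous slack in Proposition~\ref{thm:runtime:basic}.
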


{%
 \renewcommand{\eqdef}{\leftarrow}
 %
 %
 \begin{algorithm}[t]
   \KwData{Node~$t$, bag $\chi_t$, bag-framework~$F_t=(A_t, R_t)$,
     credulous argument~$c$, and $\langle \tau_{1}, \tau_2 \rangle$ is
     the sequence of tables of~children of~$t$. \shortversion{

     }\textbf{Out: } Table~$\tau_t.$}
   %
   %
   \lIf(\hspace{-1em})
   {$\type(t) = \leaf$}{%
     $\tau_{t} \eqdef \{ \langle
     \tuplecolor{\inputPredColor}{\emptyset}, \tuplecolor{\outputPredColor}{\mathcal{\emptyset}}, \tuplecolor{\outputPredColor}{\mathcal{\emptyset}}, \tuplecolor{\specialPredColor}{\emptyset}
     \rangle \}$
     %
   }%
   \uElseIf{$\type(t) = \intr$ and $a\hspace{-0.1em}\in\hspace{-0.1em}\chi_t$ is the introduced argum.}{
     $\hspace{-1em}\tau_{t} \eqdef 
     \{ \langle \tuplecolor{\inputPredColor}{J}, \tuplecolor{\outputPredColor}{\MAII{\mathcal{A}}{J \rightarrowtail_{R_t} A_t}}, \tuplecolor{\outputPredColor}{AC}, \tuplecolor{\specialPredColor}{\MAIII{\mathcal{C}}{\langle J, \mathcal{A}, AC \rangle}(a)} \rangle 
     \mid\langle \tuplecolor{\inputPredColor}{I}, \tuplecolor{\outputPredColor}{\mathcal{A}}, \tuplecolor{\outputPredColor}{\mathcal{AC}}, \tuplecolor{\specialPredColor}{\mathcal{C}} \rangle\shortversion{\hspace{-5em}}$
     $\in \tau_{1}, (J, AC) \in \guess_a(I, \mathcal{AC}),
     J \cap \{c\} = \chi(t) \cap \{c\} \}\hspace{-5em}$
     %
     \vspace{-0.05em}
   }\vspace{-0.05em}%
   \uElseIf{$\type(t) = \rem$ \KwAnd $a \not\in \chi_t$ is the removed argum.}{%
     $\hspace{-1em}\tau_{t} \eqdef 
     \{ \langle \tuplecolor{\inputPredColor}{\MAR{I}{a}}, \tuplecolor{\outputPredColor}{\MAR{\mathcal{A}}{a}}, \tuplecolor{\outputPredColor}{\MAR{\mathcal{AC}}{a}}, \tuplecolor{\specialPredColor}{\MARR{\mathcal{C}}{a}}
     \rangle 
     \mid\langle \tuplecolor{\inputPredColor}{I}, \tuplecolor{\outputPredColor}{\mathcal{A}}, \tuplecolor{\outputPredColor}{\mathcal{AC}}, \tuplecolor{\specialPredColor}{\mathcal{C}}
     \rangle \in \tau_{1}, a \in I \cup \mathcal{A} \}
     \hspace{-5em}$
     \vspace{-0.1em}
   } %
   \ElseIf{$\type(t) = \join$}{
     $\hspace{-1em}\tau_{t} \eqdef 
     \{ \langle \tuplecolor{\inputPredColor}{I}, \tuplecolor{\outputPredColor}{\MAII{{\mathcal{A}_1}}{{\mathcal{A}_2}}}, \tuplecolor{\outputPredColor}{\mathcal{AC}}, \tuplecolor{\specialPredColor}{(\mathcal{C}_1 \bowtie \mathcal{C}_2) \cup}$
     $\tuplecolor{\specialPredColor}{(\mathcal{C}_1 \bowtie \{\langle u_2, \bot\rangle \}) \cup (\{\langle u, \bot\rangle \} \bowtie \mathcal{C}_2)}
     \rangle 
     \mid u_1\!\in\!\tau_1, u_2\!\in\!\tau_2,\shortversion{\hspace{-5em}}$
     $u_1 = \langle \tuplecolor{\inputPredColor}{I}, \tuplecolor{\outputPredColor}{\mathcal{A}_1}, \tuplecolor{\outputPredColor}{\mathcal{AC}_1}, \tuplecolor{\specialPredColor}{\mathcal{C}_1} \rangle,$
     $u_2 = \langle \tuplecolor{\inputPredColor}{I}, \tuplecolor{\outputPredColor}{{\mathcal{A}_2}}, \tuplecolor{\outputPredColor}{{\mathcal{AC}_2}}, \tuplecolor{\specialPredColor}{\mathcal{C}_2}\rangle \}\hspace{-5em}$
     \vspace{-0.1em}
   }
   \Return $\tau_{t}$
   \vspace{-0.25em}
   \caption{Local algorithm~$\STAG(t, \chi_t, \cdot, (F_t, c, \cdot),
     \langle \tau_1, \tau_2 \rangle)$.}
   \label{fig:stage}
   \algorithmfootnote{
     \renewcommand{\eqdef}{{\ensuremath{\,\mathrel{\mathop:}=}}}
     {
       $\guess_a(I, \mathcal{AC}) \eqdef \big\SB (J, AC) \SM$ \shortversion{\\
       \makebox[1.7cm][l]{}}$J \in \{I, \MAI{I}{a}\}, AC \in
       \{\mathcal{AC},
       \MAI{\mathcal{AC}}{a}\},$ $J \cap AC = \emptyset, $\shortversion{\\
       \makebox[1.7cm][l]{}}$[J \rightarrowtail_{R_t} J] = \emptyset,
       [A_t
       \leftarrowtail_{R_t} J] \subseteq AC\shortversion{\qquad\qquad\quad}\,\big\}$,\\[0.25em]
       $\MAIII{\mathcal{C}}{\langle J', \mathcal{A}', AC'\rangle}(a)
       \eqdef \big\SB \langle \langle {J}, {\MAII{\mathcal{A}}{J
           \rightarrowtail_{R_t} A_t}}, {AC} \rangle, {(\MAII{J}{AC}
         \subsetneq \MAII{{J'}}{{AC'}}) \vee s}\rangle \,\big|$\shortversion{\\
       \makebox[0.5cm][l]{}} $\langle \langle {I}, {\mathcal{A}},
       {\mathcal{AC}}\rangle, s\rangle \in \MAI{\mathcal{C}}{\langle
         J, \mathcal{A}, AC \rangle, \bot \rangle}, (J,AC) \in
       \guess_a(I, \mathcal{AC}),
       $\\
       \makebox[5.5cm][l]{}$J \cap \{c\} = \chi(t) \cap \{c\}
       \}$,\\[0.5em]
       $\MARR{\mathcal{C}}{a} \eqdef \SB \langle \langle \MAR{I}{a},
       \MAR{\mathcal{A}}{a}, \MAR{\mathcal{AC}}{a} \rangle, \sigma
       \rangle \SM \langle \langle I, \mathcal{A},
       \mathcal{AC}\rangle, \sigma\rangle \in \mathcal{C}, a \in I
       \cup \mathcal{A}\}$,\\[0.25em]
       $\mathcal{C}_1 \bowtie \mathcal{C}_2 \eqdef \big\SB \langle
       \langle I, \MAII{\mathcal{A}_1}{\mathcal{A}_2}, \mathcal{AC}
       \rangle, \sigma_1 \vee \sigma_2 \rangle \SM$ \shortversion{\\
       \makebox[1.8cm][l]{}}$ \langle \langle I, {\mathcal{A}_1},
       \mathcal{AC} \rangle, \sigma_1 \rangle \in \mathcal{C}_1,
       \langle \langle I, {\mathcal{A}_2}, \mathcal{AC} \rangle,
       \sigma_2 \rangle \in \mathcal{C}_2 \,\big\}$.}} 
 \end{algorithm}%
 \renewcommand{\eqdef}{{\ensuremath{\,\mathrel{\mathop:}=}}}

}

\medskip
\noindent\textbf{Lower Bounds.}
A natural question is whether we can
significantly improve the algorithms stated in
Propositions~\ref{thm:runtime:basic} and~\ref{thm:runtime:higher}. In
other words, we are interested in lower bounds on the runtime of an
algorithm that exploits treewidth for credulous reasoning. A common
method in complexity theory is to assume that the \emph{exponential
  time hypothesis (ETH)} holds and establish reductions. The ETH
states that there is some real~$s > 0$ such that we cannot decide
satisfiability of a given 3-CNF formula~$\phi$ in
time~$2^{s\cdot\Card{\phi}}\cdot\CCard{\phi}^{\mathcal{O}(1)}$~\cite[Ch.14]{CyganEtAl15}.
Subsequently, we establish new results assuming ETH employing known
reductions from the literature.
Essentially, our lower bounds show that there is no hope for a better
algorithm.

\begin{theorem}[$\star$]\label{prop:eth_asp}
  Let $\SEM \in \{$admissible, complete, stable$\}$, $F$ be a framework and
  $k$ the treewidth of the underlying graph~$G_F$. Unless ETH fails,
  $\Cred_\SEM$ \emph{cannot} be solved in
  time~$2^{o(k)} \cdot \CCard{F}^{o(k)}$ and for
  $\SEM = \text{semi-stable}$, $\Cred_{\SEM}$ and $\Skep_\SEM$
  \emph{cannot} be solved in
  time~$2^{2^{o(k)}} \cdot \CCard{F}^{o(k)}$.
\end{theorem}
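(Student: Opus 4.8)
The plan is to transfer, for each semantics, a known complexity-theoretic reduction from quantified Boolean satisfiability to credulous reasoning into an ETH lower bound, by verifying that the reduction blows up treewidth only by a constant factor and then invoking the corresponding ETH lower bound for the source problem parameterized by treewidth. For $\SEM\in\{\admissible,\complete,\stable\}$ I would reduce from $\SAT$ (i.e.\ $\Sigma_1$-\SAT), and for $\SEM=\semistable$ from $\exists\forall$-\SAT (i.e.\ $\Sigma_2$-\SAT), matching the $\NP$- and $\Sigma_2^\Ptime$-completeness of the respective credulous problems.

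For the first group I would use the reduction of \citex{DunneBench-Capon02a} already underlying Lemma~\ref{lem:numberext-numberp-hard}: one argument pair $x,\bar x$ per variable (with a mutual attack selecting a truth value), one argument per clause wired to the literals according to the incidence structure of $\phi$, and a constant number of global ``collector'' arguments arranged so that the distinguished argument $a$ is credulously accepted iff $\phi$ is satisfiable. The key step is the observation that the underlying graph $G_F$ is essentially the incidence graph of $\phi$ together with constantly many extra vertices; hence $\tw{G_F}=\bigO{k'}$, where $k'$ is the (incidence) treewidth of $\phi$. Composing a hypothetical $2^{o(k)}\cdot\CCard{F}^{o(k)}$ algorithm for $\Cred_\SEM$ with this linear-size, linear-treewidth reduction would decide $\SAT$ in time $2^{o(k')}\cdot\CCard{\phi}^{o(k')}$, contradicting the known ETH lower bound for $\SAT$ parameterized by treewidth.

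For $\SEM=\semistable$ I would reuse the extended \citex{DvorakWoltran10} construction from the proof of Lemma~\ref{lem:pcct-numberdotcnp-hard}, which reduces $\exists\forall$-\SAT to semi-stable credulous reasoning via the incomparability gadgets $y',\bar y'$ and the range arguments $t,\bar t,b$. After again checking $\tw{G_F}=\bigO{k'}$ (with $k'$ the treewidth of the source QBF), I would invoke the ETH lower bound of \citex{LampisMitsou17}, as used by \citex{FichteEtAl18}, stating that $\exists\forall$-\SAT parameterized by treewidth admits no $2^{2^{o(k')}}\cdot\CCard{\phi}^{o(k')}$ algorithm; this rules out $2^{2^{o(k)}}\cdot\CCard{F}^{o(k)}$ for $\Cred_{\semistable}$. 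The matching bound for $\Skep_{\semistable}$ then follows by duality, reducing instead from the dual $\forall\exists$-\SAT ($\Pi_2$-\SAT), which carries the same double-exponential ETH lower bound.

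The main obstacle is the treewidth bookkeeping, since the classical reductions were designed only for class membership and never analysed for treewidth. Two points need care: the constantly many global arguments (such as $t,\bar t,b$, or the collector that every clause argument attacks) are adjacent to every clause gadget, so I must argue that placing them into every bag of a tree decomposition of the incidence graph increases the width by only an additive constant; and the per-variable doubling $x,\bar x$ (and the auxiliary copies $y',\bar y'$ in the $\exists\forall$ construction) must be shown to inflate each bag by at most a constant factor. Once $\tw{G_F}=\bigO{k'}$ is secured, the ETH transfer in both the single- and double-exponential regimes is routine.
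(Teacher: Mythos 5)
Your proposal is correct and follows essentially the same route as the paper's own (very brief) proof idea: reuse the \citex{DunneBench-Capon02a} reductions for admissible/complete/stable and the \citex{DvorakWoltran10} construction for semi-stable, observe that both increase treewidth only linearly, and transfer the ETH (resp.\ double-exponential QBF) lower bounds. The only cosmetic difference is that the paper phrases the semi-stable case via the coincidence of preferred and semi-stable extensions in the constructed framework, while you argue directly from the $\exists\forall$-\SAT{} lower bound of \citex{LampisMitsou17}; these amount to the same construction and the same bound.
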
%
\begin{proof}[Proof (Idea)]
  The existing reductions by~\citex{DunneBench-Capon02a} increase the
  treewidth only linearly and are hence sufficient. For semi-stable
  statements the reductions by~\citex{DvorakWoltran10} can
  be applied, since preferred and semi-stable extensions of the
  constructed argumentation framework coincide.
\end{proof}

\newcommand{\llangle}{\ensuremath{\langle\hspace{-2pt}\{\hspace{-0.2pt}}}
\newcommand{\rrangle}{\ensuremath{\}\hspace{-2pt}\rangle}}
\newcommand{\STab}{\ensuremath{\ATab{\AlgA}}}%
\section{Algorithms for Projected Credulous Counting by exploiting \shortversion{Bounded }Treewidth}
In the previous section, we presented algorithms to solve the
reasoning problems for abstract argumentation. These algorithms can be
extended relatively straightforward to also count extensions without
projection by adding counters to each row at a quadradtic runtime
instead of linear in the size of the input instance. One can even
reconstruct extensions~\cite{PichlerRuemmeleWoltran10}.
However, things are more complicated for projected credulous counting.

In this section, we present an algorithm~$\mdpa{\mathbb{S}}$ that
solves the projected credulous counting problem ($\PCC_\SEM$) for
semantics~$\SEM \in \ALL$. 
Our algorithm lifts results for projected model counting in the
computationally and conceptually much easier setting of propositional
satisfiability~\cite{FichteEtAl18} to abstract argumentation.
Our algorithm is based on dynamic programming and traverses a TD three
times. To this end, we employ algorithms~$\mathbb{S} \in \{\ADM,$
\COMP, \PREF, \STAG, \SEMI, $\STAB\}$ as presented in the
previous section according to the considered semantics~$\SEM$.
The first traversal consists of~$\dpa_{\mathbb{S}}$,
where~$\mathbb{S}$ is a local algorithm for credulous reasoning of the
chosen semantics, which results in
TTD~$\mathcal{T}_{\mathbb{S}\hy\textit{Cred}}=(T,\chi,\tau)$.

In the following, let again $F=(A,R)$ be the given framework, $a\in A$ an
argument, $(T,\chi)$ a TD of~$G_F$ with $T=(N,\cdot,n)$ and the
root~$n$, and $\TTT_{\mathbb{S}\hy\textit{Cred}}=(T, \chi, \tau)$ be
the TTD that has been computed by the respective algorithms as
described in the previous section.
Then, we intermediately
traverse~$\mathcal{T}_{\mathbb{S}\hy\textit{Cred}}$ in pre-order and
\emph{prune irrelevant rows}, thereby we remove all rows that cannot
be extended to a credulous extension of the corresponding
semantics~$\SEM$. We call the resulting TTD
$\mathcal{T}_{\mathbb{S}\hy\textit{Pruned}}=(T,\chi, \nu)$.  Note that
pruning does not affect correctness, as only rows are removed where
already the count without even considering projection is~$0$.
However, pruning serves as a technical trick for the last traversal to
avoid corner cases, which result in correcting counters and
backtracking.

%
%
%

%
%
In the final traversal, we count the projected credulous extensions.
Therefore, we compute a
TTD~$\mathcal{T}_{\mathbb{S}\hy\textit{Proj}}=(T,\chi,\pi)$ using
algorithm~$\dpa_{\PROJ}$ using local algorithm~$\PROJ$ as given in
Listing~\ref{fig:dpontd3}.  Algorithm~$\PROJ$ stores for each node a
pair~$\langle \sigma, c\rangle \in \pi(t)$,
where~$\sigma\subseteq \nu(t)$ is a table~$\nu(t)$ from the previous
traversal and~$c\geq 0$ is an integer representing what we call the
\emph{intersection projected count ($\ipmc$)}.

Before we start with explaining how to obtain these $\ipmc$
values~$c$, we require auxiliary notations from the literature. First,
we require a notion to reconstruct extensions from~$\TTT$, more
precisely, for a given row to define its predecessor rows in the
corresponding child tables.
Therefore, let $t$ be a node of~$T$ with children~$t_1$ and $t_2$, if
it exists. Since sequences used in the following depend on number of
the children assume for simplicity of the presentation that sequences
are implicitly of corresponding length even if they are given as of
length~2.  For a given row~$\vec u \in \tau(t)$, we define the
originating%
\footnote{%
  For sequence~$\vec s=\langle s_1, s_2 \rangle$, 
  let
  $\llangle \vec s\rrangle \eqdef \langle \{s_1\}, \{s_2\} \rangle$.
}
 rows of~$\vec u$ in node~$t$ by
$\orig(t,\vec u) \eqdef \SB \vec s \SM \vec s \in \tau(t_1) \times
\tau({t_2}), \vec u \in {\mathbb{S}}(t,\chi(t), \cdot,(F_t,\cdot),
\llangle \vec s\rrangle) \SE$ and for a table~$\sigma$ as the union
over the origins for all rows~$\vec u \in \sigma$.
%
%
Next, let~$\sigma\subseteq\nu(t)$.  In order to combine rows and solve
projection accordingly, we need equivalence classes of rows.  Let
therefore relation~$\bucket \subseteq \sigma \times \sigma$ consider
equivalent rows with respect to the projection of its extension part
by 
$\bucket \eqdef \SB (\vec u,\vec v) \SM \vec u, \vec v \in \sigma,
{{E}(\vec u)} \cap {P} = {{E}(\vec v)} \cap {P}\SE.$
Let $\buckets_P(\sigma)$ be the set of equivalence classes induced
by~$\bucket$ on~$\sigma$,~i.e.,
$\buckets_P(\sigma) \eqdef\, (\sigma / \bucket) = \SB [\vec u]_P \SM
\vec u \in \sigma\SE$, where
$[\vec u]_P = \SB \vec v \SM \vec v \bucket \vec u,\vec v \in
\sigma\}$~\cite{Wilder12a}.
%


When computing the $\ipmc$ values~$c$ stored in each row~$\vec u$
of~$\pi(t)$, we compute a so-called \emph{projected count ($\pmc$)} as
follows.  
First, we define the \emph{stored $\ipmc$}
of~$\sigma \subseteq \nu(t)$ in table~$\pi(t)$ by
$\sipmc(\pi(t), \sigma) \eqdef \sum_{\langle \sigma, c\rangle \in
  \pi(t)} c.$ We use the $\ipmc$ value in the context of ``accessing''
$\ipmc$ values in table~$\pi(t_i)$ for a child~$t_i$ of~$t$.
This can be generalized to a
sequence~$s=\langle \pi(t_1), \pi(t_2)\rangle$ of tables and a
set~$O = \{\langle \sigma_1, \sigma_2\rangle, \langle \sigma_1',
\sigma_2'\rangle, \ldots\}$ of sequences of tables by
$\sipmc(s, O)=\sipmc(s_{(1)},O_{(1)})\cdot \sipmc(s_{(2)},O_{(2)})$.
%
%
%
%
Then, the \emph{projected count~$\pmc$} of
rows~$\sigma\subseteq\nu(t)$ is the application of the
inclusion-exclusion principle to the stored intersection projected
counts,~i.e., $\ipmc$ values of children of~$t$.  Therefore, $\pmc$
determines the origins of table~$\sigma$, and uses the stored counts
($\sipmc$) in the \PROJ-tables of the children~$t_i$ of~$t$ for all
subsets of these origins.
Formally, we define\shortversion{\smallskip\\
$\pmc(t,\sigma, \langle\pi(t_1),\pi(t_2)\rangle) \eqdef$\\
\mbox{~~~~~~}$\sum_{\emptyset \subsetneq O \subseteq
  {\origs(t,\sigma)}} (-1)^{(\Card{O} - 1)} \cdot \sipmc(\langle
\pi(t_1), \pi(t_2)\rangle, O)$.\smallskip\\}
\longversion{
\[\pmc(t,\sigma, \langle\pi(t_1),\pi(t_2)\rangle) \eqdef
\sum_{\emptyset \subsetneq O \subseteq
  {\origs(t,\sigma)}} (-1)^{(\Card{O} - 1)} \cdot \sipmc(\langle
\pi(t_1), \pi(t_2)\rangle, O).\]}
\noindent Intuitively, $\pmc$ defines the number of distinct projected
extensions in framework~$F_{\leqslant t}$ to which any row in~$\sigma$
can be extended.
%
Finally, the \emph{intersection projected count}~$\ipmc$ for~$\sigma$
is the result of another application of the inclusion-exclusion
principle. It describes the number of common projected $\SEM$\hy
extensions which the rows in~$\sigma$ have in common in
framework~$F_{\leqslant t}$.  We define 
$\ipmc(t,\sigma,s)\eqdef 1$ if
$\type(t) = \leaf$ and otherwise
$\ipmc(t,\sigma,s)\eqdef$
$ \big|\pmc(t,\sigma, s)$ $+
\sum_{\emptyset\subsetneq\varphi\subsetneq\sigma}(-1)^{\Card{\varphi}}
\cdot \ipmc(t,\varphi, s)\big|$,
where $s = \langle \pi(t_1), \pi(t_2)\rangle$.
In other words, if a node is of type~$\leaf$, $\ipmc$ is one, since
bags of leaf nodes are empty. Observe that since bags~$\chi(n)$ for
root node~$n$ are empty, there is only one entry in~$\pi(n)$
and~$\pmc(n,\nu(n),s)=\ipmc(n,\nu(n), s)$, 
which corresponds to the number of projected credulous extensions.
In the end, we collect~$\pmc$\hy values for all subsets of~$\nu(t)$.

\begin{algorithm}[t]
  \KwData{%
    Node~$t$, table~$\nu_{t}$ after purging, set~$P$ of projection
    atoms, $\langle \pi_1, \pi_2 \rangle$ is the sequence of tables at
    the children of~$t$.
    %
  }%
  \KwResult{Table~$\pi_{t}$ of pairs~$\langle \sigma, c\rangle$,
    where $\sigma \subseteq \nu_{t}$, and $c \in \Nat$.\hspace{-5em}
  } %
  $\makebox[0em]{}\pi_{t}\hspace{-0.2em}\leftarrow\hspace{-0.2em}\big\SB\langle
  \sigma, \ipmc(t,\sigma,\langle \pi_{1}, \pi_2\rangle) \rangle
  \big{|}\, \emptyset \subsetneq \sigma \subseteq
  \buckets_P(\nu_{t})\,\big\}\hspace{-5em}$ \;
  \Return{$\pi_{t}$}
  \vspace{-0.15em}
  \caption{Local algorithm
    $\PROJ(t, \cdot, \nu_t, (\cdot, \cdot, P), \langle \pi_1, \pi_2 \rangle)$
    for projected counting,~c.f.,~\cite{FichteEtAl18}.}
  \label{fig:dpontd3}
\end{algorithm}


\begin{theorem}[$\star$]
  Algorithm~$\mdpa{\mathbb{S}}$ is correct and solves $\PCC_{\SEM}$
  for local algorithms~$\mathbb{S}\in\{\ADM$,
  $\COMP$, $\PREF, \STAG, \SEMI, \STAB\}$, i.e.,
  $\sipmc(\pi(n),\emptyset)$ returns the projected credulous count at
  the root~$n$ for corresponding semantics~$\SEM$.
\end{theorem}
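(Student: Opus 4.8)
The plan is to establish correctness by a bottom-up induction along $\post(T,n)$, showing that the two inclusion--exclusion quantities computed by $\PROJ$ count, respectively, unions and intersections of projected credulous extensions, and then to read off the answer at the empty-bag root. First I would fix the intended meaning of a row: for a node~$t$ and a surviving row~$\vec u\in\nu(t)$, let $E(t,\vec u)$ be the set of restrictions to~$P$ of the partial $\SEM$-extensions of the framework-below~$F_{\leqslant t}$ that agree with~$\vec u$ on the bag~$\chi(t)$, contain the credulous argument~$c$ whenever it lies below~$t$, and are assembled only from surviving rows. Correctness of the first traversal~$\dpa_{\mathbb{S}}$ (\citex{DvorakPichlerWoltran12} and the adaptations of the previous section) guarantees that the rows of~$\tau(t)$ are in bijection with the bag-traces of such partial extensions; since pruning discards exactly the rows extending to no global credulous extension---which, as already noted, leaves every projected count untouched---the tables~$\nu(t)$ retain precisely the rows with $E(t,\vec u)\neq\emptyset$. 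Finally, the classes $\buckets_P(\nu(t))$ group rows that agree on~$P$, so members of one class contribute identical projections and the grouping never double-counts.

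The core is the pair of invariants, proved simultaneously by induction on~$t$: for every nonempty~$\sigma$ and $s=\langle\pi(t_1),\pi(t_2)\rangle$ one has
\[
\pmc(t,\sigma,s)=\Bigl|\bigcup_{\vec u\in\sigma}E(t,\vec u)\Bigr|
\qquad\text{and}\qquad
\ipmc(t,\sigma,s)=\Bigl|\bigcap_{\vec u\in\sigma}E(t,\vec u)\Bigr|.
\]
Granting the $\pmc$-identity, the $\ipmc$-identity follows at once, since the recursion defining $\ipmc$ is exactly the inclusion--exclusion dual that expresses an intersection through unions of sub-families: $\bigl|\bigcap_{\vec u\in\sigma}E(t,\vec u)\bigr|=(-1)^{\Card{\sigma}-1}\bigl(\pmc(t,\sigma,s)+\sum_{\emptyset\subsetneq\varphi\subsetneq\sigma}(-1)^{\Card{\varphi}}\ipmc(t,\varphi,s)\bigr)$, so taking absolute values reproduces the stated formula, with the inner terms supplied by induction on $\Card{\varphi}<\Card{\sigma}$. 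The base case $\type(t)=\leaf$ is trivial, as $\chi(t)=\emptyset$ admits a single empty projected extension, matching $\ipmc(t,\sigma,s)=1$.

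The genuine work, and the step I expect to be the main obstacle, is the $\pmc$-identity in the inductive case. Here I would unfold $\origs(t,\sigma)$, which reconstructs for every row of~$\sigma$ all compatible child-row sequences through~$\mathbb{S}$, and apply inclusion--exclusion over the nonempty origin-subsets~$O$. The decisive point is the $\join$ node: condition~(ii) of a TD forces the arguments of~$F_{\leqslant t_1}$ and~$F_{\leqslant t_2}$ to meet only inside~$\chi(t)$, so once the bag-trace is fixed the projected extensions contributed by the two subframeworks combine independently. This independence is precisely what licenses the product $\sipmc(s,O)=\sipmc(s_{(1)},O_{(1)})\cdot\sipmc(s_{(2)},O_{(2)})$, while the summation over subsets of origins corrects for a single projected extension below~$t$ arising from several child-row combinations---the same device used for projected \#SAT by \citex{FichteEtAl18}. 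The $\intr$ and $\rem$ cases are routine: introducing an argument merely refines bag-traces without producing projections unseen among the origins, and removing an argument forgets a coordinate that condition~(ii) certifies is no longer involved, so in both cases the $E(t,\cdot)$-sets stay in bijection with the origin contributions; substituting the child counts~$\sipmc$, correct by the induction hypothesis, then closes the step.

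It remains to specialize to the root~$n$. Because $\chi(n)=\emptyset$, every surviving row has empty $P$-projection, so $\nu(n)$ forms a single bucket and $\pi(n)$ carries a single relevant entry whose value is $\pmc(n,\nu(n),s)=\ipmc(n,\nu(n),s)=\bigl|\bigcup_{\vec u}E(n,\vec u)\bigr|$, i.e.\ the number of distinct restrictions to~$P$ of credulous $\SEM$-extensions of $F_{\leqslant n}=F$. This is exactly~$\PCC_\SEM$, which the algorithm reports as $\sipmc(\pi(n),\emptyset)$. The whole argument is uniform over $\mathbb{S}\in\{\ADM,\COMP,\PREF,\STAG,\SEMI,\STAB\}$, as $\PROJ$ consults only the pruned tables~$\nu$ and the origin relation, both available and correct for each listed local algorithm.
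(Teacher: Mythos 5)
Your plan matches the paper's own argument: the paper likewise establishes a per-row invariant identifying $\pmc$ with the cardinality of the union and $\ipmc$ with the cardinality of the intersection of projected extensions below~$t$, proves both by simultaneous structural induction from the leaves to the root, and reads off $\PCC_\SEM$ at the empty-bag root; your handling of completeness via pruning and the correctness of the first traversal is only a cosmetic reorganization of the paper's root-to-leaves argument that every well-defined row has a predecessor. The proposal is correct and takes essentially the same route.
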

\begin{proof}[Proof (Idea)]
  In order to prove correctness, we can establish an invariant for
  each row of each table. Then, we show this invariant by simultaneous
  structural induction on pc and ipc starting at the leaf nodes and
  stepping until the root. This yields that the intersection projected
  count for the empty root corresponds to~$\PCC_\SEM$ for the
  semantics~$\SEM$.  For completeness, we demonstrate by induction
  from root to leaves that a well-defined row of one table, which can
  indeed be obtained by the corresponding table algorithm, always has
  some preceding row in the respective child nodes.
\end{proof}

\medskip
\noindent\textbf{Runtime Bounds (Upper and Lower).}
In the following, we present upper bounds on algorithm~$\PROJ$ that
immediately result in runtime results for~$\mdpa{\mathbb{S}}$.  Let
therefore~$\gamma(n)$ be the number of operations required to multiply
two~$n$-bit integers.  Note that
$\gamma(n)\in O(n\cdot \log(n) \cdot \log(\log(n)))$~\cite{Knuth1998}.

\begin{proposition}[$\star$, \citey{FichteHecher18b}]
  \label{thm:runtime}
  $\dpa_{\PROJ}$ runs in time
  $\bigO{2^{4m}\cdot g \cdot \gamma(\CCard{F})}$\footnote{The
    value~$m$ depends on the treewidth~$k$. However, the actual order
    depends on the semantics.} %
  where $g$ is the number of nodes of the given TD of the underlying
  graph~$G_F$ of the considered framework~$F$ and
  $m\eqdef \max\{\Card{\nu(t)} \mid t\in N\}$ for input
  TTD~$\TTT_{\text{purged}} = (T,\chi,\nu)$ of $\dpa_\PROJ$.
\end{proposition}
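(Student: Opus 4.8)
The plan is to establish the runtime bound by analyzing algorithm $\PROJ$ (Listing~\ref{fig:dpontd3}) at a single node~$t$ and then summing over all $g$ nodes of the TD. The dominant cost comes from constructing the table $\pi_t$, whose entries are pairs $\langle \sigma, c\rangle$ with $\sigma \subseteq \buckets_P(\nu_t)$. First I would bound the number of such subsets: since each row is retained or discarded independently, there are at most $2^{m}$ choices of $\sigma$ where $m = \max_t \Card{\nu(t)}$, and because we range over subsets of buckets rather than rows this is still bounded by $2^{m}$. For each fixed $\sigma$, the work is dominated by evaluating $\ipmc(t,\sigma,s)$, which unfolds recursively through $\pmc$ into inclusion–exclusion sums.

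\medskip
\noindent
Next I would account for the nested inclusion–exclusion cost. The definition of $\ipmc(t,\sigma,s)$ sums over all $\emptyset \subsetneq \varphi \subsetneq \sigma$ a term $\ipmc(t,\varphi,s)$, and $\pmc(t,\sigma,s)$ sums over all $\emptyset \subsetneq O \subseteq \origs(t,\sigma)$ a $\sipmc$ term. The key observation is that these values can be \emph{memoized}: rather than recomputing $\ipmc(t,\varphi,s)$ for each enclosing $\sigma$, one precomputes all $\ipmc$ values for every subset $\varphi \subseteq \nu_t$ in a single bottom-up sweep ordered by $\Card{\varphi}$. With memoization, computing all $\ipmc$ values costs $\sum_{\sigma}\Card{\{\varphi : \varphi \subsetneq \sigma\}}$, which is bounded by $\sum_{\sigma} 2^{\Card{\sigma}} \leqslant 3^{m}$ by the binomial identity, and the analogous sum over subsets-of-origins for $\pmc$ contributes a further factor bounded by $2^{m}$. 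Multiplying, the combined combinatorial overhead is $\bigO{2^{4m}}$, which absorbs the $3^m$ and the product-over-children structure of $\sipmc$ on $\type(t)=\join$ nodes (where origins range over pairs of child tables).

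\medskip
\noindent
Then I would account for the arithmetic: each $\sipmc$ term is a sum or product of integers that can be as large as the total projected count, hence $O(\CCard{F})$ bits wide, so every arithmetic operation costs $\gamma(\CCard{F})$. Since the number of arithmetic operations per node is subsumed by the $2^{4m}$ factor, the per-node cost is $\bigO{2^{4m}\cdot \gamma(\CCard{F})}$. Summing over the $g$ nodes of the TD yields the claimed bound $\bigO{2^{4m}\cdot g \cdot \gamma(\CCard{F})}$.

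\medskip
\noindent
\textbf{The main obstacle} I anticipate is verifying that the naive doubly-nested inclusion–exclusion (first over $\varphi \subsetneq \sigma$, then within $\pmc$ over subsets of origins, and recursively again through the child $\ipmc$ values) does not blow up beyond $2^{4m}$. The danger is that a literal reading of the recursive definition of $\ipmc$ suggests recomputation of child values exponentially often. The crux of the argument is therefore to justify the memoization order: one processes subsets $\sigma$ of $\nu_t$ in order of increasing cardinality, storing each $\ipmc(t,\sigma,s)$ once, so that every reference to a strictly smaller $\varphi$ is a table lookup rather than a recursive call. Establishing that this evaluation order is consistent — that all $\ipmc(t,\varphi,s)$ with $\varphi \subsetneq \sigma$ are already available when $\sigma$ is processed — is what keeps the exponent at $4m$ rather than something larger, and I would treat this bookkeeping as the technical heart of the proof while deferring the routine binomial-sum estimate $\sum_{j}\binom{m}{j}2^{j}=3^{m}\leqslant 2^{4m}$.
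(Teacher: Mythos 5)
Your proposal is correct and follows essentially the same accounting as the paper's proof: at most $2^{m}$ stored subsets per node, an inclusion--exclusion overhead over subsets $\varphi$ and over subsets of origins (split per child table for $\join$ nodes), a factor $\gamma(\CCard{F})$ per arithmetic operation, and a final multiplication by $g$. Your explicit treatment of the evaluation order (memoizing $\ipmc$ over subsets by increasing cardinality) is a point the paper leaves implicit, but it is a refinement of the same argument rather than a different route.
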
%

\begin{corollary}
  For~$\mathbb{S} \in \{\ADM$, \COMP, $\STAB\}$, $\mdpa{\mathbb{S}}$
  runs in time~$\bigO{2^{2^{4k}}\cdot g \cdot
    \gamma(\CCard{F})}$. For~$\mathbb{S}\in\{\PREF$, \SEMI, $\STAG\}$,
  runs in time~$\bigO{2^{2^{2^{4k}}}\cdot g \cdot \gamma(\CCard{F})}$
  where $k$ is the treewidth of the underlying graph~$G_F$ of the
  given AF~$F$.
\end{corollary}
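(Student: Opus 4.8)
The plan is to derive both bounds by instantiating the generic running time of $\dpa_\PROJ$ from Proposition~\ref{thm:runtime} with the table sizes produced for each semantics, and then observing that the projection pass dominates the whole procedure. First I would note that $\mdpa{\mathbb{S}}$ performs exactly three traversals of the TD: the credulous-reasoning pass $\dpa_{\mathbb{S}}$, the intermediate pruning pass, and the projection pass $\dpa_\PROJ$. The total runtime is the sum of the three, so up to a constant it is the maximum of the three. The first pass runs within the bound of Proposition~\ref{thm:runtime:basic} (single exponential in $k$) or~\ref{thm:runtime:higher} (double exponential in $k$), and pruning is polynomial in the table sizes; since $\dpa_\PROJ$ is at least double exponential in $k$, both are dominated and the overall bound coincides with that of the projection pass.

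Next I would bound $m \eqdef \max_{t\in N}\Card{\nu(t)}$, the quantity appearing in Proposition~\ref{thm:runtime}. Because pruning only deletes rows, $\Card{\nu(t)}\le\Card{\tau(t)}$, so $m$ is at most the largest table produced by $\dpa_{\mathbb{S}}$. For $\mathbb{S}\in\{\ADM,\COMP,\STAB\}$ every row assigns one of a constant number of roles to each of the at most $k+1$ bag arguments, giving $m=\bigO{c^k}$ with $c\le 5$. For $\mathbb{S}\in\{\PREF,\SEMI,\STAG\}$ a row additionally carries a set of counter-witnesses, each of which is essentially a basic row, so the number of rows is $m=\bigO{2^{2^{dk}}}$ for a constant $d$; crucially, the witness/counter-witness bookkeeping uses only a constant number of roles per bag argument, so $d$ can be taken strictly below $4$ for large $k$.

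I then substitute into the bound $\bigO{2^{4m}\cdot g\cdot\gamma(\CCard{F})}$ of Proposition~\ref{thm:runtime}. In the basic case $4m=\bigO{5^k}$, and since $5^k=2^{k\log_2 5}$ with $\log_2 5<4$, constant factors are absorbed and $4m\le 2^{4k}$ for large $k$, whence $2^{4m}=\bigO{2^{2^{4k}}}$. In the higher case $4m=\bigO{2^{2^{dk}}}$ with $d<4$, so $\log_2(4m)=\bigO{2^{dk}}=o(2^{4k})$ and therefore $4m\le 2^{2^{4k}}$ for large $k$, giving $2^{4m}=\bigO{2^{2^{2^{4k}}}}$. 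Combining with the domination argument yields the two stated running times, with $k=\tw{G_F}$.

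The main obstacle is the constant-tracking inside the tower of exponentials rather than any conceptual difficulty. It is not enough to know that $m$ is single- respectively double-exponential in $k$: the innermost base must be strictly small enough to be swallowed by the $2^{4k}$ that appears in the target. For instance, a naive table-size bound of $m=2^{2^{4k}}$ (matching the loose exponent $4k+1$ of Proposition~\ref{thm:runtime:higher}) would give $2^{4m}=2^{4\cdot 2^{2^{4k}}}=(2^{2^{2^{4k}}})^{4}$, which is not $\bigO{2^{2^{2^{4k}}}}$. Hence the argument must rely on the tighter row count, whose innermost exponent is $o(2^{4k})$ because only a constant number of roles per bag argument drives the table size; this is precisely what guarantees that multiplying $m$ by the constant $4$ does not inflate the top of the tower.
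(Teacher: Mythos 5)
Your derivation is correct and is exactly the route the paper intends: the corollary follows by bounding the maximal table size $m$ of the pruned TTD via Propositions~\ref{thm:runtime:basic} and~\ref{thm:runtime:higher} (single‑ respectively double‑exponential in $k$, with inner base/exponent strictly below the $2^{4k}$ threshold) and substituting into the $\bigO{2^{4m}\cdot g\cdot\gamma(\CCard{F})}$ bound of Proposition~\ref{thm:runtime}, the projection pass dominating the other two traversals. Your explicit check that the factor $4$ in $2^{4m}$ is absorbed only because the innermost exponent of $m$ is $o(2^{4k})$ is a point the paper leaves implicit, and it is handled correctly.
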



Next, we take again the exponential time hypothesis (ETH) into account
to establish lower bounds for counting projected extensions.  In
particular, we obtain that under reasonable assumptions, we cannot
expect to improve the presented algorithms significantly.

\begin{theorem}[$\star$]\label{prop:eth_proj_counting}
  Let $\SEM \in\{$admissible, complete, stable$\}$. Unless ETH fails,
  we cannot solve the problem~$\PCC_\SEM$ in
  time~$2^{2^{o(k)}} \cdot \CCard{F}^{o(k)}$ where~$k$ is the
  treewidth of the underlying graph~$G_F$ of the considered framework~$F$.
\end{theorem}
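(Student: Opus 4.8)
The plan is to transfer the known double-exponential ETH lower bound for projected model counting to $\PCC_\SEM$ by reusing the parsimonious reduction already constructed in the hardness proof, and then verifying that this reduction blows up treewidth only by a constant factor. This mirrors the strategy of Theorem~\ref{prop:eth_asp}, where the decision-level reductions were shown to increase treewidth linearly; the only genuinely new work is the treewidth bookkeeping for the projected-counting reduction.

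First I would recall the source bound. By \citex{FichteEtAl18} (building on the $\exists\forall$-\SAT lower bound of \citex{LampisMitsou17}), unless ETH fails, projected \#SAT --- equivalently $\#\Sigma_1\SAT$, which asks to count the assignments to the free variables $Y$ of a formula $\exists X\,\psi(X,Y)$ with $\psi$ in CNF --- cannot be solved in time $2^{2^{o(k)}}\cdot\CCard{\psi}^{o(k)}$, where $k$ is the treewidth of (the incidence graph of) $\psi$. Since the theorem restricts to $\SEM\in\{$admissible, stable, complete$\}$, I would take exactly the parsimonious reduction from $\#\Sigma_1\SAT$ to $\PCC_\SEM$ described in the proof of Lemma~\ref{lem:pcct-numberdotcnp-hard}(2). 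That reduction preserves the projected count exactly, i.e.\ $\Card{A(\psi)}=\Card{B(F,P,a)}$ for the projection set $P$ and argument $a$ fixed by the construction, so no counting argument has to be redone here.

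The key step is to bound $\tw{G_F}$. I would fix an optimal tree decomposition $\TTT=(T,\chi)$ of the incidence graph of $\psi$ of width $k$ and transform it into a decomposition of the underlying graph $G_F$: each variable node is replaced by its constantly many literal arguments (and the per-variable selection/guard gadgets, such as the $s_i$ attacking $t$ from Lemma~\ref{lem:numberext-numberp-hard}), each clause node $C_i$ becomes the single argument $C_i$, and the attack edges mirror exactly the variable--clause incidences, so these local substitutions increase every bag size only by a constant factor. Finally the constantly many globally shared arguments $\{t,\bar t,b\}$ --- which are adjacent to all clause arguments --- are simply added to every bag, raising the width by an additive constant. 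This yields $\tw{G_F}\in\bigO{k}$, while $\CCard{F}$ stays polynomial in $\CCard{\psi}$.

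Combining the two, suppose for contradiction that $\PCC_\SEM$ could be solved in time $2^{2^{o(k')}}\cdot\CCard{F}^{o(k')}$ with $k'=\tw{G_F}$. As $k'\in\bigO{k}$ we have $2^{2^{o(k')}}=2^{2^{o(k)}}$, and as $\CCard{F}$ is polynomial in $\CCard{\psi}$ we have $\CCard{F}^{o(k')}=\CCard{\psi}^{o(k)}$; running this hypothetical algorithm on the reduced instance and reading off the unchanged projected count would decide $\#\Sigma_1\SAT$ in time $2^{2^{o(k)}}\cdot\CCard{\psi}^{o(k)}$, contradicting ETH. The main obstacle I anticipate is precisely the treewidth analysis: one must confirm that no gadget introduces high-degree interactions that inflate the width super-linearly. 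So long as every selection/defence gadget is local to a single variable or clause and the only globally shared arguments are the constantly many $\{t,\bar t,b\}$, the additive/constant-factor bound goes through, but carrying this out carefully for the admissible and complete cases --- where the defence requirements add extra attack edges --- is the delicate part.
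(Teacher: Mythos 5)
Your proof is correct, but it transfers hardness from a different source than the paper does. The paper's argument starts from the \emph{decision} problem $\forall\exists\hy\SAT$ (hard under ETH by \citex{LampisMitsou17}) and reduces it to asking whether the projected credulous count equals a fixed maximal value~$\ell$ --- i.e., whether every $X$-assignment extends to a projected extension containing~$t$ --- so that any fast algorithm for $\PCC_\SEM$ would decide $\forall\exists\hy\SAT$ too quickly. You instead invoke the already-established double-exponential ETH lower bound for projected model counting ($\#\Sigma_1\SAT$) due to \citex{FichteEtAl18}, and push it through the parsimonious reduction of Statement~2 of Lemma~\ref{lem:pcct-numberdotcnp-hard}; since the counts are preserved exactly, no ``count equals $\ell$'' device is needed. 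Both routes ultimately rest on the same Lampis--Mitsou result, use the identical AF construction, and require the same treewidth bookkeeping (local per-variable/per-clause gadgets plus the constantly many global arguments $t,\bar t,b$ added to every bag), which you carry out in at least as much detail as the paper's sketch. What your version buys is modularity: the counting-to-counting composition is a black-box application of a cited theorem. What the paper's version buys is a marginally stronger message --- even \emph{deciding} whether the projected count attains a prescribed value is already double-exponentially hard in the treewidth --- and independence from the correctness of the counting lower bound of \citex{FichteEtAl18} as a separate result. One cosmetic caveat: like the paper, you should state the target count as $2^{\Card{X}}$ (the number of $X$-assignments) if you ever switch to the decision-flavoured formulation; this does not affect your argument as written, since you never need that device.
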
%
\begin{proof}[Proof (Sketch)]
  We establish the lower bound by reducing an instance
  of~$\forall\exists\hy$\SAT to an instance of a version of
  $\Cred_\SEM$ where the extension is of size exactly~$\ell$.
  %
  Note that under ETH the problem $\forall\exists\hy$\SAT cannot be
  solved~\cite{LampisMitsou17} in
  time~$2^{2^{o(k)}} \cdot \CCard{F}^{o(k)}$ in the worst case.
  We follow the reduction from the proof of Statement~2 in
  Lemma~\ref{lem:pcct-numberdotcnp-hard}. Let~$\ell=\Card{X}$, and
  observe that we can compute reduction in polynomial-time and the
  treewidth of the projected credulous counting instance is increased
  only linearly. It is easy to see that the reduction is correct
  since~$\Card{B(AF,X,t)}=\ell=\Card{X}$ if and only
  if~$\varphi(X)=\exists Y \psi(X,Y)$ holds for all assignments
  using~$X$. Consequently, the claim follows.
\end{proof}

For semi-stable, preferred and stage semantics, we believe that this
lower bound is not tight. Hence, we apply a stronger version (3ETH) of
the ETH for quantified Boolean formulas (QBF). However, it is open
whether also ETH implies 3ETH.

\begin{hypothesis}[3ETH, \citey{FichteHecher18b}]\label{hyp:lampis3}
  The problem $\exists\forall\exists$-\SAT for a quantified Boolean
  formula~$\Phi$ of treewidth~$k$ can not be decided in
  time~${2^{2^{2^{o(k)}}}}\cdot \CCard{\Phi}^{o(k)}$.
\end{hypothesis}

Using this hypothesis, we establish the following result.

\begin{theorem}[$\star$]\label{thm:lowerbound_semi_pref}
  Let $\SEM \in \{$preferred, semi-stable, stage semantics$\}$. Unless
  3ETH fails, we cannot solve the problem~$\PCC_\SEM$ in
  time~$2^{2^{2^{o(k)}}} \cdot \CCard{F}^{o(k)}$ where~$k$ is the
  treewidth of the underlying graph of~$F$.
\end{theorem}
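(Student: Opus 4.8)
The plan is to mirror the argument used for Theorem~\ref{prop:eth_proj_counting}, but to climb one level higher in the quantifier hierarchy: instead of reducing $\forall\exists$-\SAT through the $\#\Sigma_1\SAT$-based construction, I would reduce $\exists\forall\exists$-\SAT through the $\#\Sigma_2\SAT$-based construction of Statement~1 of Lemma~\ref{lem:pcct-numberdotcnp-hard}. Fix an instance $\Phi = \exists X\,\forall Y\,\exists Z\;\psi(X,Y,Z)$ of $\exists\forall\exists$-\SAT with $\psi$ in CNF and treewidth $k$; by Hypothesis~\ref{hyp:lampis3} this cannot be decided in time $2^{2^{2^{o(k)}}}\cdot\CCard{\Phi}^{o(k)}$. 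The idea is to build an \SEM-framework whose projected credulous count, read off by comparison against a fixed threshold, decides $\Phi$.

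Concretely, I would first pass to the $\Sigma_2$-formula $\varphi(X)\eqdef \exists Y\,\forall Z\;\neg\psi(X,Y,Z)$, noting that $\neg\psi$ is a DNF (the matrix form required by Statement~1) obtained by De~Morgan at no increase in treewidth. Applying the construction of Statement~1 of Lemma~\ref{lem:pcct-numberdotcnp-hard} to $\varphi(X)$ yields a framework $F=(A,R)$ with projection set $P\eqdef X$ and distinguished argument $\bar t$ such that the stage/semi-stable extensions containing $\bar t$ and restricted to $X$ are in bijection with the assignments $x$ satisfying $\varphi(x)$; that is, $\PCC_\SEM$ for $F$, $P$, and $\bar t$ equals $\Card{\SB x \SM \exists Y\,\forall Z\;\neg\psi(x,Y,Z)\SE}$. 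Since $\exists Y\,\forall Z\;\neg\psi(x,\cdot,\cdot)$ is exactly the negation of the inner matrix $\forall Y\,\exists Z\;\psi(x,\cdot,\cdot)$ of $\Phi$, this count equals $2^{\Card{X}}$ minus the number of $x$ with $\forall Y\,\exists Z\;\psi(x,Y,Z)$. Consequently $\Phi$ holds if and only if at least one such $x$ exists, i.e.\ if and only if $\PCC_\SEM < 2^{\Card{X}}$. Thus an exact value of $\PCC_\SEM$ together with a polynomial-time comparison against $2^{\Card{X}}$ decides $\Phi$.

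It remains to control the parameter and transfer the bound. As in the proof idea of Theorem~\ref{prop:eth_asp}, the construction is computable in polynomial time and raises treewidth only additively: the per-variable and per-clause gadgets follow the incidence structure of $\psi$, while the global arguments $t,\bar t,b$ can be placed in every bag of a width-$k$ decomposition, so $\tw{G_F}\le k+\bigO{1}$. Hence a hypothetical algorithm solving $\PCC_\SEM$ in time $2^{2^{2^{o(k)}}}\cdot\CCard{F}^{o(k)}$ would, via $\CCard{F}=\CCard{\Phi}^{\bigO{1}}$, decide $\exists\forall\exists$-\SAT in time $2^{2^{2^{o(k)}}}\cdot\CCard{\Phi}^{o(k)}$, contradicting 3ETH. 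For $\SEM=$~preferred I would invoke the same coincidence used in Theorem~\ref{prop:eth_asp}: on the constructed framework the preferred and semi-stable extensions agree, so the identical count and bound carry over.

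The main obstacle I anticipate is the bookkeeping that guarantees genuine three-level alternation. One must check that the outermost $\exists X$ really arises from the threshold test (and not from collapsing with the guessed $Y$), which is precisely why the matrix is negated: without this step the count would encode $\exists X\,\exists Y\,\forall Z$, which collapses to two alternations and only yields the weaker double-exponential bound of Theorem~\ref{prop:eth_proj_counting}. Verifying that the range-maximisation gadget (the $y',\bar y'$ arguments making $Y$-assignments incomparable) still forces the intended $\forall Y$-behaviour after negation, and that the parsimonious correspondence of Statement~1 survives unchanged, is the delicate part; the treewidth accounting, by contrast, is routine once the global arguments are absorbed into every bag as above.
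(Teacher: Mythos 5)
Your proposal is correct and follows essentially the same route as the paper: both apply the reduction from Statement~1 of Lemma~\ref{lem:pcct-numberdotcnp-hard} to the $\Sigma_2$ part of the quantified formula, compare the resulting projected credulous count against the total number of $X$-assignments, and note that the construction increases treewidth only linearly. The only cosmetic difference is that the paper first passes from $\exists\forall\exists$-\SAT{} to $\forall\exists\forall$-\SAT{} by complementation and then tests whether the count equals $2^{\Card{X}}$, whereas you keep the original instance and test whether the count is strictly below $2^{\Card{X}}$ --- these are the same argument.
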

\begin{proof}[Proof (Idea)]
  Assuming Hypothesis~\ref{hyp:lampis3} we cannot solve an instance
  of~$\forall\exists\forall$-\SAT in
  time~$2^{2^{2^{o(k)}}} \cdot \CCard{F}^{o(k)}$, otherwise we could
  solve an instance~$\Phi$ of~$\exists\forall\exists$-\SAT, using a
  decision procedure for~$\forall\exists\forall$-\SAT with the inverse
  of~$\Phi$ and inverting the result, in
  time~$2^{2^{2^{o(k)}}} \cdot \CCard{F}^{o(k)}$.
  Towards the lower bound, we finally establish a reduction
  from~$\forall\exists\forall$-\SAT to projected credulous count
  exactly~$\ell$ (c.f.,
  Theorem~\ref{prop:eth_proj_counting}). Thereby, we apply the
  reduction provided in Statement~1 of
  Lemma~\ref{lem:pcct-numberdotcnp-hard}, set~$\ell\eqdef \Card{X}$
  and proceed analogously to Theorem~\ref{prop:eth_proj_counting}.
\end{proof}

\section{Conclusion and Outlook}
We established the classical complexity of counting problems in
abstract argumentation. We complete these results by presenting an
algorithm that solves counting projected credulous extensions when
exploiting treewidth in runtime double exponential in the treewidth or
triple exponential in the treewidth depending on the considered
semantics.
Further, assuming ETH or a version for 3QBF, we establish that the
runtime of the algorithms are asymptotically tight and we cannot
significantly improve on the runtime for algorithms that exploit
treewidth.
While the upper bounds in Lemma~\ref{lem:numberext-in-numberp} can be
easily transferred to counting the number of extensions of a specific
kind, the corresponding lower bounds cannot be immediately adopted
from Lemma~\ref{lem:numberext-numberp-hard}.
An open question is to investigate whether $\numberDotCoNP$-hardness
also applies for the preferred semantics.
An interesting further research direction is to study whether we can
obtain better runtime results by designing algorithms that take in
addition also the number (small or large) of projection arguments into
account or
to study whether an implementation of our approach can benefit from
massive parallelization~\cite{FichteEtAl18a}.
Finally, our technique might also be applicable to problems such as 
circumscription~\cite{DurandHermannKolaitis05}, default
logic~\cite{FichteHecherSchindler18a}, or
QBFs~\cite{CharwatWoltran16a}.
Considering the (parameterized) enumeration complexity
\cite{JohnsonPY88,cmmsv17,ckmmov15} of the studied problems is also
planned as future work.
%
%

 \cleardoublepage

\shortversion{
	
}
\longversion{
	\bibliographystyle{abbrvnat}
	\bibliography{argu_counting_aaai19_refs.bib}
}
\longversion{%

\newpage

%


\clearpage
\appendix






\section{Additional Resources}

\begin{figure}
	\centering
	\begin{tikzpicture}[every node/.style={circle,draw=black,thin}]
		\node (v) [] {$v$};
		\node (w) [right=2em of v] {$w$};
		\node (x) [right=2em of w] {$x$};
		\node (y) [right=2em of x] {$y$};
		\node (z) [right=2em of y] {$z$};
		\draw [stealth'-stealth'] (w) -- (x);
		\draw [-stealth'] (w) to [bend right=45] (y);
		\draw [-stealth'] (z) to [bend right=45] (x);
		\path[-stealth'] (z) edge [loop above, >=stealth'] ();
	\end{tikzpicture}
	\caption{Argumentation framework $F_{\mathit{Ex}}$.}
	\label{fig:argu}
\end{figure}
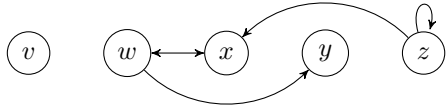

\begin{example}
  Figure~\ref{fig:argu} illustrates an AF where $F_1=(A_1, R_1)$ with
  $A_1=\{v,w,x,y,z\}$ and $R_1=\{(w,x),(x,w),$ $(w,y),(z,z),(z,x)\}$,
  c.f.~\cite[Ex.~2.7]{BliemHecherWoltran16a}. 
  Observe that $\emptyset\in\conflictfree({F_1})$. For every
  $a\in A_1$ such that $a\not=z$ it holds that
  $\{a\}\in\conflictfree({F_1})$; since $v$ is isolated, also
  $\{v,a\}\in\conflictfree({F_1})$ for every $a\in A_{F_1}$ with
  $a\not=z$. Argument $z$ is not contained in any
  $S\in \conflictfree({F_1})$, since it attacks itself.  Finally,
  $\conflictfree({F_1})=\{\emptyset, \{v\},$
  $\{w\},\{x\},\{y\},\{v,w\},$ $\{v,x\},\{v,y\},\{x,y\},$
  $\{v,x,y\}\}$.
  Argument $x$ can never be part of any admissible extension as $z$
  has a self-loop. 
  We have that
  $\admissible({F_1})=\{\emptyset, \{v\}, \{w\}, \{v,w\}\}$.
  The set $\emptyset$ is not complete since
  $\adef_{F_1}(\emptyset)=\{v\}$; $\{w\}\not\in\complete(\{w\})$,
  since $\adef_{F_1}(\{w\})=\{v,w\}$.  In the end,
  $\complete({F_1})=\{\{v\}, \{v,w\}\}$.
  Observe that $\preferred({F_1})=\semistable(F_1)=\stage(F_1)=\{\{v,w\}\}$.
  Finally, since $z$ is not contained in any extension
  $S\in\conflictfree({F_1})$ and it is not attacked by any $a\in S$
  ($z$ only attacks itself), there cannot be any stable extension.
\end{example}

\subsection{Additional Table Algorithms}

Listing~\ref{fig:confl} presents a local algorithm~$\CONFL$ for conflict-free extensions,
whose core is also used in Listing~\ref{fig:stage}. A local algorithm~$\STAB$ for stable extensions,
which, in fact, is a simplification of Listing~\ref{fig:prim}, is provided in Listing~\ref{fig:stabl}.
Finally, Listing~\ref{fig:compl} depicts an algorithm~$\COMP$ for complete semantics working
with five different states, as mentioned in Section~``DP for Abstract Argumentation''.
For computing preferred semantics via dynamic programming ($\dpa_\PREF$),
one can use the idea of the local algorithm~$\ADM$ for admissible semantics
and subset-maximize using counterwitnesses (similar to Listing~\ref{fig:stage}) accordingly.
Finally, local algorithm~$\SEMI$ finally is similar to~$\STAB$, but relies on the idea of~$\ADM$.

{
	\renewcommand{\eqdef}{\leftarrow}
	%
	%
	 \begin{algorithm}[h]
	   \KwData{Node~$t$, bag $\chi_t$, bag-framework~$F_t=(A_t, R_t)$,
	   credulous argument~$c$, and
	     $\langle \tau_{1}, \tau_2 \rangle$ is the sequence of tables
	     of~children of~$t$. \shortversion{

	     }\textbf{Out: } Table~$\tau_t.$}
	     %
	   %
	   \lIf(\hspace{-1em})
	   {$\type(t) = \leaf$}{%
	     $\tau_{t} \eqdef \{ \langle
	     \tuplecolor{\inputPredColor}{\emptyset}
	     \rangle \}$
	     %
	   }%
	  \uElseIf{$\type(t) = \intr$ and $a\hspace{-0.1em}\in\hspace{-0.1em}\chi_t$ is the introduced argum.}{
	   $\tau_{t} \eqdef 
	   \{ \langle \tuplecolor{\inputPredColor}{J} 
	    \rangle 
	     \mid\langle \tuplecolor{\inputPredColor}{I} 
\rangle\in \tau_{1}, J \in \{I, \MAI{I}{a}\},$
	    $J \rightarrowtail_{R_t} J = \emptyset, J \cap \{c\} = \chi(t) \cap \{c\} \}$
	     %
	   \vspace{-0.05em}
	     }\vspace{-0.05em}%
	     \uElseIf{$\type(t) = \rem$ \KwAnd $a \not\in \chi_t$ is the removed argum.}{%
	       $\tau_{t} \eqdef 
	       \{ \langle \tuplecolor{\inputPredColor}{\MAR{I}{a}}
	       \rangle 
	       \mid\langle \tuplecolor{\inputPredColor}{I}
	       \rangle \in \tau_{1} \}
	       \hspace{-5em}$
	       \vspace{-0.1em}
	     } %
	     \ElseIf{$\type(t) = \join$}{
	       $\tau_{t} \eqdef 
	       \{ \langle \tuplecolor{\inputPredColor}{I}
		 \rangle 
		 \mid\langle \tuplecolor{\inputPredColor}{I}
		 \rangle\in \tau_{1},$
		 $\langle \tuplecolor{\inputPredColor}{I}
		 \rangle\in \tau_{2}\}\hspace{-5em}$
	       \vspace{-0.1em}
	     }
	     \Return $\tau_{t}$
	     \vspace{-0.25em}
	     \caption{Local algorithm~$\CONFL(t, \chi_t, \cdot, (F_t, c, \cdot),
	       \langle \tau_1, \tau_2 \rangle)$.}
	 \label{fig:confl}
	\end{algorithm}%
	\renewcommand{\eqdef}{{\ensuremath{\,\mathrel{\mathop:}=}}}

}

{
	\renewcommand{\eqdef}{\leftarrow}
	%
	%
	 \begin{algorithm}[h]
	   \KwData{Node~$t$, bag $\chi_t$, bag-framework~$F_t=(A_t, R_t)$,
	   credulous argument~$c$, and
	     $\langle \tau_{1}, \tau_2 \rangle$ is the sequence of tables
	     of~children of~$t$. \shortversion{

	     }\textbf{Out: } Table~$\tau_t.$}
	     %
	   %
	   \lIf(\hspace{-1em})
	   {$\type(t) = \leaf$}{%
	     $\tau_{t} \eqdef \{ \langle
	     \tuplecolor{\inputPredColor}{\emptyset}, \tuplecolor{\statePredColor}{\emptyset}
	     \rangle \}$
	     %
	   }%
	  \uElseIf{$\type(t) = \intr$ and $a\hspace{-0.1em}\in\hspace{-0.1em}\chi_t$ is the introduced argum.}{
	   $\tau_{t} \eqdef 
	   \{ \langle \tuplecolor{\inputPredColor}{J}, 
	    \tuplecolor{\statePredColor}{\MAII{{D}}{J \leftarrowtail_{R_t} A_t}} \rangle 
	     \mid\langle \tuplecolor{\inputPredColor}{I}, 
\tuplecolor{\statePredColor}{{D}} \rangle\in \tau_{1}, J \in \{I, \MAI{I}{a}\},$
	    $J \rightarrowtail_{R_t} J = \emptyset, J \cap \{c\} = \chi(t) \cap \{c\} \}$
	     %
	   \vspace{-0.05em}
	     }\vspace{-0.05em}%
	     \uElseIf{$\type(t) = \rem$ \KwAnd $a \not\in \chi_t$ is the removed argum.}{%
	       $\tau_{t} \eqdef 
	       \{ \langle \tuplecolor{\inputPredColor}{\MAR{I}{a}}, 
	        \tuplecolor{\statePredColor}{\MAR{{D}}{a}}
	       \rangle 
	       \mid\langle \tuplecolor{\inputPredColor}{I},
	       \tuplecolor{\statePredColor}{{D}}
	       \rangle \in \tau_{1}, a \in I \cup {D} \}
	       \hspace{-5em}$
	       \vspace{-0.1em}
	     } %
	     \ElseIf{$\type(t) = \join$}{
	       $\tau_{t} \eqdef 
	       \{ \langle \tuplecolor{\inputPredColor}{I}, 
	        \tuplecolor{\statePredColor}{\MAII{{{D}_1}}{{{D}_2}}}
		 \rangle 
		 \mid\langle \tuplecolor{\inputPredColor}{I}, 
		 \tuplecolor{\statePredColor}{{D}_1} \rangle\in \tau_{1},$
		 $\langle \tuplecolor{\inputPredColor}{I},
		 \tuplecolor{\statePredColor}{{{D}_2}} \rangle\in \tau_{2}\}\hspace{-5em}$
	       \vspace{-0.1em}
	     }
	     \Return $\tau_{t}$
	     \vspace{-0.25em}
	     \caption{Local algorithm~$\STAB(t, \chi_t, \cdot, (F_t, c, \cdot),
	       \langle \tau_1, \tau_2 \rangle)$, c.f., \cite{DvorakPichlerWoltran12}.}
	 \label{fig:stabl}
	\end{algorithm}%
	\renewcommand{\eqdef}{{\ensuremath{\,\mathrel{\mathop:}=}}}

}

{
	\renewcommand{\eqdef}{\leftarrow}
	%
	%
	 \begin{algorithm}[h]
	   \KwData{Node~$t$, bag $\chi_t$, bag-framework~$F_t=(A_t, R_t)$,
	   credulous argument~$c$, and
	     $\langle \tau_{1}, \tau_2 \rangle$ is the sequence of tables
	     of~children of~$t$. \shortversion{
	
	     }\textbf{Out: } Table~$\tau_t.$}
	     %
	   %
	   \lIf(\hspace{-1em})
	   {$\type(t) = \leaf$}{%
	     $\tau_{t} \eqdef \{ \langle
	     \tuplecolor{\inputPredColor}{\emptyset}, \tuplecolor{\statePredColor}{\emptyset}, \tuplecolor{\statePredColor}{\emptyset}, \tuplecolor{\outputPredColor}{\emptyset}, \tuplecolor{\outputPredColor}{\emptyset}
	     \rangle \}$
	     %
	   }%
	  \uElseIf{$\type(t) = \intr$ and $a\hspace{-0.1em}\in\hspace{-0.1em}\chi_t$ is the introduced argum.}{
	   $\hspace{-1em}\tau_{t} \eqdef 
	   \{ \langle \tuplecolor{\inputPredColor}{J}, 
	    \tuplecolor{\statePredColor}{\MAII{\mathcal{D}}{D \leftarrowtail_{R_t} J}},
	     \tuplecolor{\statePredColor}{D},
	     \tuplecolor{\outputPredColor}{\MAII{\mathcal{O}}{O \leftarrowtail_{R_t} O}},
	     \tuplecolor{\outputPredColor}{O}
	     \rangle 
	     \mid\langle \tuplecolor{\inputPredColor}{I}, 
\tuplecolor{\statePredColor}{\mathcal{D}},
\tuplecolor{\statePredColor}{\mathcal{DC}},
\tuplecolor{\outputPredColor}{\mathcal{O}},
\tuplecolor{\outputPredColor}{\mathcal{OC}} \rangle\in \tau_{1}, J \in \{I, \MAI{I}{a}\},$ $D \in \{\mathcal{DC}, \MAI{\mathcal{DC}}{a}\},\shortversion{\hspace{-5em}}$ $O \in \{\mathcal{OC}, \MAI{\mathcal{OC}}{a}\},$ $J \cap D \cap O = \emptyset, $
	    $J \rightarrowtail_{R_t} J = \emptyset,$ $J \rightarrowtail_{R_t} O = \emptyset,$ $O \rightarrowtail_{R_t} J = \emptyset, J \cap \{c\} = \chi(t) \cap \{c\} \}$
	     %
	   \vspace{-0.05em}
	     }\vspace{-0.05em}%
	     \uElseIf{$\type(t) = \rem$ \KwAnd $a \not\in \chi_t$ is the removed argum.}{%
	       $\hspace{-1em}\tau_{t} \eqdef 
	       \{ \langle \tuplecolor{\inputPredColor}{\MAR{I}{a}}, 
	        \tuplecolor{\statePredColor}{\MAR{\mathcal{D}}{a}}, \tuplecolor{\statePredColor}{\MAR{\mathcal{DC}}{a}}, \tuplecolor{\outputPredColor}{\MAR{\mathcal{O}}{a}}, \tuplecolor{\outputPredColor}{\MAR{\mathcal{OC}}{a}}
	       \rangle 
	       \mid\langle \tuplecolor{\inputPredColor}{I},
	       \tuplecolor{\statePredColor}{\mathcal{D}},
	       \tuplecolor{\statePredColor}{\mathcal{DC}},
	       \tuplecolor{\outputPredColor}{\mathcal{O}},
	       \tuplecolor{\outputPredColor}{\mathcal{OC}}
	       \rangle \in \tau_{1},$
	       $a \in I \cup \mathcal{D} \cup \mathcal{O} \}
	       \hspace{-5em}$
	       \vspace{-0.1em}
	     } %
	     \ElseIf{$\type(t) = \join$}{
	       $\hspace{-1em}\tau_{t} \eqdef 
	       \{ \langle \tuplecolor{\inputPredColor}{I}, 
	        \tuplecolor{\statePredColor}{\MAII{{\mathcal{D}_1}}{{\mathcal{D}_2}}},
	        \tuplecolor{\statePredColor}{{{\mathcal{DC}}}},
	        \tuplecolor{\outputPredColor}{\MAII{{\mathcal{O}_1}}{{\mathcal{O}_2}}},
	        \tuplecolor{\outputPredColor}{{{\mathcal{OC}}}}
		 \rangle 
		 \mid\langle \tuplecolor{\inputPredColor}{I}, 
		 \tuplecolor{\statePredColor}{\mathcal{D}_1}, \tuplecolor{\statePredColor}{\mathcal{DC}}, \tuplecolor{\outputPredColor}{\mathcal{O}_1}, \tuplecolor{\outputPredColor}{\mathcal{OC}} \rangle\in \tau_{1},$
		 $\langle \tuplecolor{\inputPredColor}{I},
		 \tuplecolor{\statePredColor}{{\mathcal{D}_2}},
		 \tuplecolor{\statePredColor}{\mathcal{DC}}, \tuplecolor{\outputPredColor}{\mathcal{O}_2}, \tuplecolor{\outputPredColor}{\mathcal{OC}}\rangle\in \tau_{2}\}\hspace{-5em}$
	       \vspace{-0.1em}
	     }
	     \Return $\tau_{t}$
	     \vspace{-0.25em}
	     \caption{Local algorthm~$\COMP(t, \chi_t, \cdot, (F_t, c, \cdot),
	       \langle \tau_1, \tau_2 \rangle)$, c.f., \cite{DvorakPichlerWoltran12}.}
	 \label{fig:compl}
	\end{algorithm}%
	\renewcommand{\eqdef}{{\ensuremath{\,\mathrel{\mathop:}=}}}

}

\subsection{Further Proof Details}

\begin{restateproposition}[thm:runtime:basic]
\begin{proposition}
  Algorithm~$\dpa_{\STAB}$ runs in
  time~$\bigO{3^{k} \cdot k \cdot g}$, $\dpa_{\ADM}$ in
  $\bigO{4^{k} \cdot k \cdot g}$, and~$\dpa_{\COMP}$ in
  $\bigO{5^{k} \cdot k \cdot g}$ where $k$ is the width and $g$ the
  number of nodes of the TD.
\end{proposition}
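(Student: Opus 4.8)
The plan is to multiply three quantities: the number $g$ of nodes, an $\bigO{c^{k}}$ bound on the size of every table, and an $\bigO{k}$-per-row bound on the work performed at each node, where $c=3,4,5$ for $\STAB,\ADM,\COMP$ respectively. Establishing the table-size bound and a matching per-node bound for join nodes are the two substantive steps.

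First I would bound the number of rows. I claim a row of a table at node~$t$ is determined by assigning to each of the at most $k+1$ arguments in $\chi(t)$ one of $c$ mutually exclusive \emph{states}, so the table has at most $c^{k+1}=\bigO{c^{k}}$ rows. For $\STAB$ the states are \emph{in the extension} ($I$), \emph{defeated} ($D$), or \emph{neither}; these are exclusive because conflict-freeness forces $I\cap D=\emptyset$, giving $c=3$. For $\ADM$ an argument outside $I$ and outside $D$ is further split according to whether it attacks the extension (lies in $O$) or not; since $D$ is consulted only through the remove-node test $a\notin O\setminus D$, all defeated arguments may be collapsed into a single state, leaving the four states \emph{in-extension}, \emph{defeated}, \emph{undefeated attacker}, \emph{neutral}, so $c=4$. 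For $\COMP$ the five states are precisely the ones named in the text. Proving that the collapse for $\ADM$ loses no information---that the algorithm never needs to distinguish a defeated attacker from a defeated non-attacker---is the first point I would check carefully.

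Next I would treat leaf, introduce, and remove nodes. Each such node scans the $\bigO{c^{k}}$ rows of its single child table, emits a constant number of successors per row, and performs only set operations on subsets of the bag, i.e.\ $\bigO{k}$ work per row; hence $\bigO{c^{k}\cdot k}$ in total.

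The main obstacle is the join node, and the whole purpose of the sharpened bound is to avoid the naive pairwise combination of the two child tables, which---even after restricting to pairs agreeing on the matched components---costs more than $c^{k}$, for instance $\bigO{5^{k}}$ for $\STAB$. My plan is to first group the rows of both child tables by the components the join forces to be equal (the extension part $I$ for $\STAB$ and $\ADM$; $I$ together with the candidate sets $\mathcal{DC},\mathcal{OC}$ for $\COMP$), which takes $\bigO{c^{k}\cdot k}$ by sorting. Within one group the output is obtained by componentwise \emph{unions} of the remaining coordinates ($D$; or $O$ and $D$; or $\mathcal{D}$ and $\mathcal{O}$), so the output rows of a group are exactly the join-convolution of the two groups over a product of short chains (e.g.\ the $3$-chain \emph{neutral}$<$\emph{attacker}$<$\emph{defeated} for $\ADM$). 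I would compute this with the fast zeta/M\"obius (covering-product) transform, whose cost on a group $B$ of combined size $N_B$ with $m_B\le k+1$ free coordinates is $\bigO{N_B\cdot m_B}$. Summing over all groups gives
\[
\sum_{B} N_B\cdot m_B\;\le\;(k+1)\sum_{B}N_B\;=\;(k+1)\cdot\bigO{c^{k}}\;=\;\bigO{c^{k}\cdot k},
\]
since the groups partition the child tables and $\sum_B N_B$ is just their total size. Combining the per-node bound $\bigO{c^{k}\cdot k}$ over all $g$ nodes yields the stated $\bigO{3^{k}\cdot k\cdot g}$, $\bigO{4^{k}\cdot k\cdot g}$, and $\bigO{5^{k}\cdot k\cdot g}$. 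Besides the $\ADM$ state-collapse, the delicate step will be verifying that each free coordinate combines by a genuine semilattice join, which is what licenses the zeta/M\"obius transform.
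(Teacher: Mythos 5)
Your proposal is correct and its first half coincides with the paper's own argument: the paper also bounds the table size by assigning each of the at most $k+1$ bag arguments one of $c$ mutually exclusive states ($c=3,4,5$ for $\STAB,\ADM,\COMP$), and it performs exactly your $\ADM$ collapse, phrased as ``modify \ADM so that the attacker set and the defeated set are disjoint''. Where you genuinely diverge is the join node: the paper simply asserts that ``with the help of efficient data structures'' nodes~$t$ with $\type(t)=\join$ can be handled within the $\bigO{c^{k}}$ bound, whereas you supply an actual mechanism --- group by the coordinates forced to agree, then compute the componentwise-union convolution of the free coordinates by a zeta/M\"obius covering-product transform over a product of short chains. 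This is the right tool and it does deliver the claimed bound, so your write-up is in this respect more substantive than the published sketch. Two small points to tidy up: (i) the transform runs over the \emph{full} product-of-chains domain of a group, not only over the rows actually present, so your sum $\sum_B N_B\cdot m_B$ should be taken with $N_B$ the domain size of group~$B$; this is harmless because the domain sizes of all groups sum to exactly $c^{k+1}$ (that identity is precisely how the table-size bound decomposes), so the total stays $\bigO{c^{k}\cdot k}$. (ii) Recovering the exact covering product ($y\vee z=x$ rather than $y\vee z\le x$) requires M\"obius inversion over the integers followed by a nonzero test, which involves $\bigO{k}$-bit counters; the paper silently absorbs such arithmetic into the model of computation, and you may do the same.
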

\end{restateproposition}
\begin{proof}[Proof (Sketch)]
  Let~$d = k+1$ be maximum bag size of the
  TD~$\TTT$. 
  We only discuss the case for algorithm~$\dpa_{\ADM}$ here.
  The table~$\tau(t)$ has at most
  $4^{d}$ rows of the form~$\langle I, \mathcal{A}, \mathcal{D}\rangle$,
  since an argument actually can be either in one of these sets~$I,\mathcal{A},\mathcal{D}$
  or in none of them (just modify \ADM such that~$\mathcal{A}\cap\mathcal{D}=\emptyset$).
  In total, with the help of efficient data structures, e.g., for nodes~$t$ with~$\type(t)=\join$, one can establish a runtime bound of~$\bigO{{4^{d}}}$.
  %
  Then, we check within the bag for admissibility, keeping in mind only the changes and apply this to every node~$t$ of the TD, which resulting in
  running
  time~$\bigO{{4^{d}} \cdot d \cdot g}\subseteq \bigO{4^{k} \cdot k\cdot g}$.
  %
\end{proof}

\begin{restateproposition}[thm:runtime]
\begin{proposition}[\citey{FichteHecher18b}]
  $\dpa_{\PROJ}$ runs in time
  $\bigO{2^{4m}\cdot g \cdot \gamma(\CCard{F})}$ where $g$ is the number of nodes of the given TD of the underlying
  graph~$G_F$ of the considered AF~$F$ and
  $m\eqdef \max\{\Card{\nu(t)} \mid t\in N\}$ for input
  TTD~$\TTT_{\text{purged}} = (T,\chi,\nu)$ of $\dpa_\PROJ$.
\end{proposition}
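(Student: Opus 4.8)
The plan is to bound the cost of a single invocation of the local algorithm $\PROJ$ at an arbitrary node $t$ and then multiply by the number $g$ of nodes, since $\dpa_{\PROJ}$ merely traverses the TTD in post-order and runs $\PROJ$ once per node, with the table-passing and post-order bookkeeping being linear. First I would bound the number of rows of $\pi_t$: by construction $\pi_t$ contains exactly one pair $\langle\sigma,c\rangle$ for every non-empty $\sigma\subseteq\buckets_P(\nu_t)$, and since the equivalence classes of $\buckets_P(\nu_t)$ partition $\nu_t$ there are at most $\Card{\nu_t}\le m$ of them; hence $\pi_t$ has at most $2^m$ rows. This accounts for the first factor of $2^m$.

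Next I would analyse the cost of producing the value $\ipmc(t,\sigma,s)$ stored with each such $\sigma$. By definition this value unfolds into a self-referential inclusion–exclusion $\sum_{\emptyset\subsetneq\varphi\subsetneq\sigma}(-1)^{\Card{\varphi}}\cdot\ipmc(t,\varphi,s)$ over the proper subsets of $\sigma$, together with the projected count $\pmc(t,\sigma,s)$. Processing the subsets $\sigma$ in order of increasing cardinality so that the referenced $\ipmc$-values are already available, the $\varphi$-part ranges over at most $2^m$ subsets per $\sigma$.

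The crux is $\pmc(t,\sigma,s)=\sum_{\emptyset\subsetneq O\subseteq\origs(t,\sigma)}(-1)^{\Card{O}-1}\cdot\sipmc(s,O)$. A naive bound is hopeless, because $\origs(t,\sigma)$ can contain up to $\Card{\nu(t_1)}\cdot\Card{\nu(t_2)}\in\bigO{m^2}$ origin pairs, so there are up to $2^{m^2}$ subsets $O$. The key structural fact I would exploit is the product factorisation $\sipmc(s,O)=\sipmc(\pi(t_1),O_{(1)})\cdot\sipmc(\pi(t_2),O_{(2)})$ together with the observation that, by Condition~(ii) of the TD and the way $\mathbb{S}$ builds rows, the origins of a bucket-subset project onto bucket-subsets of the two children sharing the same projection value, and each $\sipmc(\pi(t_i),\cdot)$ is then a single table lookup. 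This lets the sum be reorganised so that its two components range over the subsets $O_{(1)}$ of $\nu(t_1)$ and $O_{(2)}$ of $\nu(t_2)$ stored in $\pi(t_1)$ and $\pi(t_2)$, of which there are at most $2^m$ each, contributing the remaining two factors of $2^m$. A straightforward, if slightly loose, accounting that re-evaluates $\pmc$ inside the $\varphi$-recursion thus has four $2^m$-sized ranges — the choice of $\sigma$, the subset $\varphi$, and the two child components $O_{(1)},O_{(2)}$ — giving $2^{4m}$ operations per node (a memoised variant would even yield $2^{3m}$).

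Finally I would account for arithmetic cost: every stored counter is bounded by the total number of projected extensions of $F$, hence by $2^{\CCard{F}}$, so it has $\bigO{\CCard{F}}$ bits and each addition or multiplication costs $\gamma(\CCard{F})$. Combining, one node costs $\bigO{2^{4m}\cdot\gamma(\CCard{F})}$ and the whole traversal $\bigO{2^{4m}\cdot g\cdot\gamma(\CCard{F})}$. The hard part, and the only genuinely non-routine step, is certifying that the inclusion–exclusion over subsets of origins collapses from $2^{m^2}$ down to $2^{\bigO{m}}$ by means of the factorisation and the projection-compatibility of origins; the remaining bounds are routine counting.
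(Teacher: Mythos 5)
Your proposal is correct and follows essentially the same accounting as the paper's proof: at most $2^m$ table entries $\sigma$, at most $2^m$ subsets $\varphi$ in the $\ipmc$ recursion, at most $2^m\cdot 2^m$ origin subsets taken per child table, and a factor $\gamma(\CCard{F})$ for arbitrary-precision arithmetic, all multiplied by $g$ nodes. The only difference is that you spell out why the origin subsets collapse from $2^{m^2}$ to $2^{m}\cdot 2^{m}$ via the product factorisation of $\sipmc$, a point the paper asserts in one line without elaboration.
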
%
\end{restateproposition}
\begin{proof}
  For each
  node~$t$ of $T$, we consider the table $\nu(t)$ of $\TTT_{\text{purged}}$. 
  %
  Let TDD~$(T,\chi,\pi)$ be the output of~$\dpa_\PROJ$. In the worst case,
  we store in~$\pi(t)$ each subset~$\rho \subseteq \nu(t)$ together
  with exactly one counter. Hence, we have at most $2^{m}$ many rows
  in $\rho$.
  %
  %
  In order to compute $\ipmc$ for~$\rho$, we consider every
  subset~$\varphi \subseteq \rho$ and compute~$\pmc$. Since
  $\Card{\rho}\leqslant m$, we have at most~$2^{m}$ many subsets $\varphi$
  of $\rho$. Finally, for computing $\pmc$, we consider in the worst
  case each subset of the origins of~$\varphi$ for each child table,
  which are at most~$2^{m}\cdot 2^{m}$ because of nodes~$t$
  with~$\type(t)=\join$.
  %
  %
  In total, we obtain a runtime bound
  of~$\bigO{2^{m} \cdot 2^{m} \cdot 2^{m}\cdot 2^{m} \cdot
    \gamma(\CCard{F})} \subseteq \bigO{2^{4m} \cdot
    \gamma(\CCard{F}})$ due to multiplication of two $n$-bit
  integers for nodes~$t$ with~$\type(t)=\join$ at costs~$\gamma(n)$.
  %
  %
  Then, we apply this to every node of~$T$ 
  resulting in
  runtime~$\bigO{2^{4m} \cdot g \cdot \gamma(\CCard{F})}$.
  %
\end{proof}

\subsubsection{Classical Counting Complexity}
\begin{restateproposition}[lem:numberext-in-numberp]
\begin{lemma}
$\numberCred_\SEM$ is in
\begin{enumerate}
	\item $\numberDotP$ if $\SEM$ is conflict-free, stable, admissible, or complete.
	\item $\numberDotCoNP$ if $\SEM$ is preferred, semi-stable, or stage.
\end{enumerate}
\end{lemma}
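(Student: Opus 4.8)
The plan is to read the result straight off the definition of the sharp-dot classes $\numberDotP = \#\cdot\Ptime$ and $\numberDotCoNP = \#\cdot\co\NP$ given in the preliminaries. For an instance $(F,a)$ with $F=(A,R)$, I take the witness function $w(F,a) \eqdef \SB S \SM S \in \SEM(F),\, a \in S \SE$, so that $\Card{w(F,a)}$ is exactly the number $\numberCred_\SEM$ asks to output. Every witness $S$ is a subset of $A$ and hence encodable in size linear in $\CCard{F}$, which discharges condition~(1.) of the sharp-dot definition (the polynomial length bound) uniformly for all seven semantics. It therefore remains only to determine, semantics by semantics, the complexity of the verification problem ``given $(F,a)$ and a candidate $S$, is $S \in \SEM(F)$ and $a \in S$?'', since membership of this decision problem in $\Ptime$ (resp.\ $\co\NP$) is condition~(2.) for membership in $\numberDotP$ (resp.\ $\numberDotCoNP$).

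For Part~(1) I would note that each of the four semantics admits a polynomial-time membership test. Checking $a\in S$ and conflict-freeness $(S\times S)\cap R=\emptyset$ is immediate; for stable one additionally checks that every $s\in A\setminus S$ is attacked from $S$; for admissible one verifies that each $s\in S$ is defended, i.e.\ every attacker of $s$ has a counter-attacker in $S$, by a polynomial scan of $R$; and for complete one computes $\adef_F(S)$ in polynomial time and tests $\adef_F(S)=S$. Since all these checks run in $\Ptime$ and $\Ptime$ is closed under conjunction, the verifier is polynomial-time, whence $\numberCred_\SEM\in\numberDotP$.

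For Part~(2) the verifier must additionally certify a maximality condition, and this is where the complexity rises to $\co\NP$. For preferred, after confirming in $\Ptime$ that $S$ is admissible, one must confirm that \emph{no} admissible $S'\supsetneq S$ exists; for semi-stable, that no admissible $S'$ satisfies $S^+_R\subsetneq (S')^+_R$; for stage, that for the conflict-free $S$ no conflict-free $S'$ satisfies $S^+_R\subsetneq (S')^+_R$. In each case the negated statement ``such an $S'$ exists'' is an $\NP$ property: guess a polynomially sized $S'\subseteq A$ and check admissibility or conflict-freeness together with the (polynomial) range comparison. Hence the maximality condition itself lies in $\co\NP$, and conjoining it with the polynomial-time base check keeps the verifier in $\co\NP$, giving $\numberCred_\SEM\in\numberDotCoNP$.

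The main obstacle is more bookkeeping than depth: one has to make sure that the maximality tests for the second group require only a \emph{single} $\NP$-style guess of a counter-witness $S'$, i.e.\ that the range sets $S^+_R$ and $(S')^+_R$ and the strict-inclusion comparison are all polynomial-time computable once $S'$ is fixed, so that the universal quantifier over larger candidate sets lands cleanly in $\co\NP$ rather than higher in the hierarchy. Once this is observed, the remaining arguments are routine closure properties of the respective counting classes.
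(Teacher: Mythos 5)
Your proposal is correct and follows essentially the same route as the paper: both define the witness set as the $\SEM$-extensions containing $a$, note the trivial polynomial size bound, and reduce the question to the complexity of extension verification, which is in $\Ptime$ for the first group and in $\co\NP$ (due to the subset/range-maximality check) for the second. The only difference is that you spell out the verification procedures explicitly where the paper cites prior work for them.
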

\end{restateproposition}

\begin{proof}
	The nondeterministic machine first guesses a candidate extension set $S$ and then verify whether it is an extension of the desired semantics plus if the given argument is contained in it.
	The number of computation paths then one-to-one corresponds to possible extensions.
\begin{enumerate}
\item Being conflict-free can be checked in $\Ptime$.
  \citex{DBLP:conf/ecsqaru/Coste-MarquisDM05} show that the
  verification process of extensions for the semantics admissible, stable,
  and complete can be done in deterministic polynomial time.
	
	\item For semi-stable, resp., stage extensions, we need to ensure that there exists no set $S'\subseteq A$ whose range is a superset of the range of the extension candidate.
		This property can be verified with a $\co\NP$ oracle.
		Similarly, \citex{DunneBench-Capon02a} claim that verifying if a given extension is preferred is $\co\NP$-complete.\qedhere%
\end{enumerate}%
\end{proof}

\begin{restateproposition}[lem:numberext-numberp-hard]
\begin{lemma}
$\numberCred_\SEM$ is
\begin{enumerate}
	\item $\numberDotP$-hard under $\parsi$ reductions if $\SEM$ is stable, admissible, or complete.
	\item $\numberDotCoNP$-hard under $\subtr$ reductions if $\SEM$ is semi-stable, or stage.
\end{enumerate}
	
\end{lemma}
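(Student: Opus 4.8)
The plan is to prove both statements by reduction from established hard counting problems, adapting the classical decision-complexity constructions while tracking cardinalities exactly.

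For Statement~(1) I would give parsimonious reductions from $\#\SAT$. Given a CNF formula $\varphi=\bigwedge_{j=1}^{m}C_j$ over variables $x_1,\dots,x_n$, I build the framework of \citex{DunneBench-Capon02a}: arguments $x_i,\bar x_i$ for each variable with the mutual attacks $(x_i,\bar x_i),(\bar x_i,x_i)$; a clause argument $C_j$ attacked by every literal occurring in $C_j$; and a target argument $t$ attacked by all $C_j$. The credulous argument is $a\eqdef t$. For the stable (and, analogously, the complete) semantics the mutual attacks force exactly one of $x_i,\bar x_i$ into any extension, a clause argument is defeated precisely when the induced assignment satisfies that clause, and $t$ can lie in a stable extension exactly when all clauses are defeated; hence the stable/complete extensions containing $t$ are in bijection with the satisfying assignments, which is parsimony. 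The delicate case is the admissible semantics, where an admissible set need not select either $x_i$ or $\bar x_i$, so many admissible sets may induce the same partial assignment and the map is no longer bijective. Following the sketch I would repair this by adding arguments $s_1,\dots,s_n$ with $(s_i,t)\in R$ whose only attackers are $x_i$ and $\bar x_i$; then defending $t$ forces the extension to contain one literal per variable, pinning down a complete satisfying assignment and restoring the one-to-one correspondence.

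For Statement~(2) I would lean on \citex{DurandHermannKolaitis05}, who proved that $\countCirc$ (counting minimal models of a CNF) is $\numberDotCoNP$-complete under subtractive reductions. Since subtractive reductions compose and subsume parsimonious ones, it suffices to exhibit a parsimonious (hence subtractive) reduction from $\countCirc$ to $\numberCred_\SEM$ for $\SEM\in\{$semi-stable, stage$\}$. The structural lever is that both semantics maximize the range $S^+_R$, so I want range-maximality to coincide with minimality of the set of true variables. I would reuse the variable/clause gadget above and, as the sketch indicates, attach to each negative literal $\bar x_i$ an additional argument attacked by $\bar x_i$; then selecting $\bar x_i$ (i.e.\ setting $x_i$ to false) enlarges the range by one more than selecting $x_i$. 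Consequently, among the conflict-free (resp.\ admissible) sets that induce models of $\varphi$, the range-maximal ones are exactly those whose set of positive choices is inclusion-minimal, i.e.\ the minimal models; choosing the credulous argument so that it belongs precisely to these extensions makes $\Card{\{S\in\SEM(F)\mid a\in S\}}$ equal to the number of minimal models of $\varphi$.

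The step I expect to be the main obstacle is the correspondence in Statement~(2). I must verify in all three directions that range-maximization captures \emph{exactly} the minimal models: every minimal model must yield a genuinely semi-stable/stage extension (no conflict-free or admissible set of strictly larger range exists), distinct minimal models must yield distinct extensions, and no spurious range-maximal extension may arise that fails to correspond to a minimal model. Pinning down the extra gadget so that these three requirements hold simultaneously — and so that the count is preserved on the nose — is where the real effort lies; Statement~(1) is comparatively routine once the admissible-semantics fix is in place.
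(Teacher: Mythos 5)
Your proposal matches the paper's own proof: Statement~(1) uses the same parsimonious reduction from $\#\SAT$ via the construction of Dunne and Bench-Capon, with the identical fix of adding arguments $s_1,\dots,s_n$ attacking $t$ (defendable only by $x_i$ or $\bar x_i$) to make the admissible case parsimonious, and Statement~(2) uses the same reduction from $\countCirc$ in which each negative literal $\bar x_i$ attacks an extra (self-attacking, in the paper) argument $b_i$ so that range-maximality of semi-stable/stage extensions containing $t$ captures exactly the minimal models. The delicate verification you flag at the end is precisely the part the paper also leaves at sketch level, so the two arguments are essentially the same.
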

\end{restateproposition}
\begin{proof}[Proof (Sketch)]
1. 
Start with the case of stable or complete extensions. 
Adopting ideas of \citex{DunneBench-Capon02a}, we construction a parsimonious reduction from $\#\SAT$.
Given a propositional formula $\varphi(x_1,\dots,x_n)=\bigwedge_{i=1}^m C_i$ with clauses $C_i$, define an AF $F_\varphi=(A,R)$ where
	\begin{align*}
	A &= \SB x_i,\bar x_i\SM 1\leqslant i\leqslant n\SE\cup\SB C_i\SM 1\leqslant i\leqslant m\SE\cup \{t,\bar t\},\\
	R &= \SB (x_i,\bar x_i),(\bar x_i,x_i)\SM 1\leqslant i\leqslant n\SE\\
	&\,\cup\SB (x_i,C_j)\SM x_i\in C_j\SE\cup\SB(\bar x_i,C_j)\SM\bar x_i\in C_j\SE\\
	&\,\cup\SB (C_i,t)\SM 1\leqslant i\leqslant m\SE\cup\{(t,\bar t),(\bar t,t)\}.
	\end{align*}
        Then, due to the range maximality, the number of satisfying
        assignments of $\varphi$ coincides with the number of stable
        (complete) extensions of $F_\varphi$ which contain the
        argument $t$.
	
        For the case of admissible extensions, to count correctly, it is crucial that for
        each variable $x_i$ either argument $x_i$ or $\bar x_i$ is
        part of the extension.  To ensure this, we introduce arguments
        $s_1,\dots,s_n$ attacking $t$ that can only be defended by one
        of $x_i$ or $\bar x_i$.  As a result, for each admissible
        extension $S$, we have that $\Card{S\cap\{x_i,\bar x_i\}}=1$
        for each $1\leqslant i\leqslant n$.  The modified framework
        for this case then is $F_\varphi'=(A',R')$, where
	\begin{align*}
		A' &= A\cup \SB s_i\SM 1\leqslant i\leqslant n\SE,\\
		R' &= R\cup \SB (s_i,t),(x_i,s_i),(\bar x_i,s_i)\SM 1\leqslant i\leqslant n\SE.
	\end{align*}

2. We state a parsimonious reduction from counting minimal models of CNFs to the $\numberCred_\SEM$ problem.
The formalism of circumscription is well-established in the area of AI \cite{McCarthy80}. 
Formally, one considers assignments of Boolean formulas that are \emph{minimal} regarding the \emph{pointwise partial order} on truth assignments:
if $s=(s_1,\dots,s_n),s'=(s_1',\dots,s_n')\in\{0,1\}^n$, then write $s<s'$ if $s\neq s'$ and $s_i\leqslant s_i'$ for every $i\leqslant n$.
%
%
Then, we define the problem $\countCirc$ which asks given a Boolean
formula $\varphi$ in CNF to output the number of minimal models of
$\varphi$.
\citex{DurandHermannKolaitis05} showed that $\countCirc$ is
$\numberDotCoNP$-complete via subtractive reductions.  Given a Boolean
formula $\varphi(x_1,\dots,x_n)=\bigwedge_{i=1}^mC_i$ with $C_i$ are
disjunctions of literals, we will construction an argumentation
framework $F_\varphi=(A,R)$ as follows:
\begin{align*}
	A &= \SB x_i,\bar x_i, b_i\SM 1\leqslant i\leqslant n\SE\cup\SB C_i\SM 1\leqslant i\leqslant m\SE\cup \{t\},\\
	R &= \SB (b_i,b_i),(\bar x_i,b_i),(x_i,\bar x_i),(\bar x_i,x_i)\SM 1\leqslant i\leqslant n\SE\\
	&\,\cup\SB (x_i,C_j)\SM x_i\in C_j\SE\cup\SB(\bar x_i,C_j)\SM\bar x_i\in C_j\SE\\
	&\,\cup\SB (C_i,t)\SM 1\leqslant i\leqslant m\SE.
\end{align*}

The crux is that choosing negative literals is more valuable than selecting positive ones.
This is true as each negative literal additionally attack a corresponding $b_i$ and thereby increases the range (more than the positive literal could). 
Consequently, this construction models subset minimal models.
Finally, one merely needs to select models where $t$ is in a range-maximal semi-stable, resp., stage extension.
\end{proof}

\begin{restateproposition}[lem:proj-upperbound]
\begin{lemma}
	$\PCC_\SEM$ is in
	\begin{enumerate}
		\item $\#\cdot\NP$ if $\SEM$ is stable, admissible, or complete.
		\item $\#\cdot\Sigma_2^\Ptime$ if $\SEM$ is semi-stable, or stage.
	\end{enumerate}
\end{lemma}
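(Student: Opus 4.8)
The plan is to exhibit, for each instance $(F,P,a)$ with $F=(A,R)$, an explicit witness function $w$ whose cardinality equals $\PCC_\SEM$ and whose associated membership test lands in the required decision class, matching exactly the definition of $\#\cdot\mathcal{C}$ given in the preliminaries. The crucial design choice is to let the \emph{projected} sets themselves be the witnesses:
\[
  w(F,P,a) \eqdef \SB T \subseteq P \SM \exists\, S \in \SEM(F),\ a\in S,\ S\cap P = T \SE.
\]
By construction $\Card{w(F,P,a)} = \Card{\SB S\cap P \SM S\in\SEM(F),\, a\in S\SE} = \PCC_\SEM$, because collapsing to $T$ automatically identifies all full extensions that agree on $P$. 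Each witness $T$ is a subset of $A$, hence of length polynomial in $\CCard{F}$, so condition~(1) of the definition of $\#\cdot\mathcal{C}$ holds immediately, and the counting is over \emph{distinct} $T$, not over computation paths.

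It remains to classify the membership problem ``given $(F,P,a)$ and $T\subseteq P$, is $T\in w(F,P,a)$?''. First I would guess the full extension $S\subseteq A$ nondeterministically and then check in polynomial time that $a\in S$ and $S\cap P = T$; the only nontrivial ingredient is verifying $S\in\SEM(F)$. For $\SEM\in\{$stable, admissible, complete$\}$ this verification is in $\Ptime$ (as already recorded in the proof of Lemma~\ref{lem:numberext-in-numberp}), so the whole membership test is an existential guess followed by a polynomial-time check, i.e.\ it lies in $\NP$. This gives statement~(1): $\PCC_\SEM\in\#\cdot\NP$.

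For $\SEM\in\{$semi-stable, stage$\}$ the same skeleton applies, but the test $S\in\SEM(F)$ now requires ruling out any competitor $S'$ with strictly larger range $S^+_R\subsetneq(S')^+_R$; this universal quantifier over $S'$ (with a polynomial inner admissibility, resp.\ conflict-freeness, check) makes verification $\co\NP$, again by Lemma~\ref{lem:numberext-in-numberp}. Thus membership is an $\NP$ guess of $S$ relative to a $\co\NP$ verification, placing it in $\NP^{\co\NP}=\NP^{\NP}=\Sigma_2^\Ptime$, which yields statement~(2).

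The one point that needs care, and which I expect to be the main obstacle, is the multiplicity argument rather than the complexity bookkeeping: I must argue that taking witnesses at the level of the projected set $T$ (rather than the full extension $S$) gives exactly one witness per distinct projected extension. Were one to guess $S$ directly as the witness, the witness count would equal $\numberCred_\SEM$, not $\PCC_\SEM$; the projection onto $P$ is precisely what the witness definition must absorb. Making this identification explicit — and checking that the membership test genuinely certifies the \emph{existence} of at least one extension projecting to $T$, so that several such $S$ still contribute a single $T$ — is the step where a careless formulation collapses back to non-projected counting.
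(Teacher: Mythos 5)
Your proposal is correct and follows essentially the same route as the paper: the witnesses are the projected sets $T\subseteq P$, and membership of $T$ is certified by an existential guess of a full extension $S$ with $S\cap P=T$ and $a\in S$, whose verification is in $\Ptime$ (giving $\NP$, hence $\#\cdot\NP$) or in $\co\NP$ (giving $\NP^{\co\NP}=\Sigma_2^\Ptime$, hence $\#\cdot\Sigma_2^\Ptime$). The ``multiplicity'' point you flag at the end is precisely what the paper also relies on by branching on projected extensions $S\subseteq P$ rather than on full extensions.
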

\end{restateproposition}

\begin{proof}[Proof (Sketch)]
	Given an argumentation framework $AF$, a projection set $P$, and an argument $a$.
	Nondeterministically branch on a possible projected extension $S$.
	Accordingly we have $S\subseteq P$.
	If $a\in S$ and $S$ is of the respective semantics, then accept.
	Otherwise make the one allowed nondeterministic oracle guess $S'\supseteq S$, verify if $P\cap S'=S$, $a\in S'$, and $S'$ is of the desired semantics.
	As explained in the proof of Lemma~\ref{lem:numberext-in-numberp} extension verification is (1.) in $\Ptime$ for stable, admissible, or complete, and (2.) in $\co\NP$ for semi-stable, or stage.
	Concluding, we get an $\NP$ oracle call for the first case, and an $\NP^{\co\NP}=\NP^\NP=\Sigma_2^\Ptime$ oracle call in the second case.
	This yields either $\#\cdot\NP$ or $\#\cdot\Sigma_2^\Ptime$ as upper bounds.
\end{proof}

\begin{restateproposition}[lem:pcct-numberdotcnp-hard]
\begin{lemma}
	$\PCC_\SEM$ is
	\begin{enumerate}
		\item $\#\cdot\Sigma_2^\Ptime$-hard w.r.t.\ $\parsi$ reductions if $\SEM$ is stage, or semi-stable.
		\item $\#\cdot\NP$-hard w.r.t.\ $\parsi$ reductions if $\SEM$ is admissible, stable, or complete.
	\end{enumerate}
\end{lemma}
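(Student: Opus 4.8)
The plan is to prove both statements by exhibiting parsimonious reductions from the canonical complete problems for the two target counting classes, whose completeness under parsimonious reductions is due to \citex{DurandHermannKolaitis05}: for~(1) I would reduce from $\#\Sigma_2\SAT$ and for~(2) from $\#\Sigma_1\SAT$. For~(1), starting from a formula $\varphi(X)=\exists Y\forall Z\,\psi(X,Y,Z)$ with $\psi$ a DNF, I first pass to the negation $\varphi'(X)\equiv\forall Y\exists Z\,\neg\psi$ and put $\neg\psi$ into NNF, obtaining a CNF $\psi'=\bigwedge_{i=1}^m C_i$. This is exactly the shape of formula handled by the $\Sigma_2^\Ptime$-hardness construction of \citex{DvorakWoltran10} for credulous acceptance, so I would build a framework $AF=(A,R)$ containing literal arguments $x,\bar x$, $y,\bar y$, $z,\bar z$, a primed incomparability gadget $y',\bar y'$, clause arguments $C_i$, and control arguments $t,\bar t,b$, wiring each literal to the clauses it occurs in, each clause to $t$, and the gadgets as in \citex{DvorakWoltran10}.

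The heart of the argument is to verify two structural facts about this framework. First, the self-loops $(y',y'),(\bar y',\bar y')$ together with the attacks $(y,y'),(\bar y,\bar y')$ force the ranges of extensions corresponding to distinct $Y$-assignments to be pairwise incomparable, so that the range-maximization in the semi-stable and stage definitions faithfully simulates the universal quantifier $\forall Y$; meanwhile the self-loop $(b,b)$, the mutual attack $(t,\bar t),(\bar t,t)$, and the extra attack $(t,b)$ make the choice between $t$ and $\bar t$ mandatory and render $t$ strictly range-dominant over $\bar t$. Second, combining these observations, an extension can contain $t$ exactly when, for the fixed $X$-assignment, every $Y$-assignment admits a $Z$-assignment defeating all clause arguments, i.e.\ precisely when $\varphi'(X)$ holds and hence $\varphi(X)$ fails. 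Taking the projection set $P=\SB x,\bar x \SM x\in X\SE$ and credulous argument $a=\bar t$, the projected semi-stable (resp.\ stage) extensions containing $\bar t$ are then in bijection with the satisfying $X$-assignments of $\varphi$, giving $\Card{A(\varphi(X))}=\Card{B(AF,X,\bar t)}$ and thus a parsimonious reduction.

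For~(2) I would run the analogous but strictly simpler reduction from $\#\Sigma_1\SAT$, i.e.\ from counting the assignments to the free variables $Y$ of $\exists X_1\,\psi(X_1,Y)$ (with $\psi$ a CNF) that extend to a model. Here one reuses the $\#\cdot\Ptime$-hardness constructions of Lemma~\ref{lem:numberext-numberp-hard} for the admissible, stable, and complete semantics, but now sets the projection set $P$ to the literals of the free variables $Y$ and keeps the control argument $t$ as the credulous argument. Projection collapses all full extensions that agree on $Y$ but differ on the guessed $X_1$-part into a single projected extension, so the projected count equals the number of $Y$-assignments for which some $X_1$-completion satisfies $\psi$; the resulting bijection is again immediate and yields $\#\cdot\NP$-hardness.

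I expect the main obstacle to be the correctness argument for~(1), specifically checking that the incomparability and control gadgets behave identically under both the admissibility-based semi-stable semantics and the conflict-free-based stage semantics. One must rule out spurious conflict-free (for stage) or admissible (for semi-stable) sets whose range strictly dominates the intended witnesses, so that the subset-maximization neither merges two distinct $X$-assignments nor discards a genuine one, and one must confirm that the $t$-versus-$\bar t$ dichotomy survives range-maximization in both regimes. Once this is pinned down, the remaining obligations---polynomial-time computability of the construction and the counting bijection onto $P$---are routine.
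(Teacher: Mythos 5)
Your proposal follows essentially the same route as the paper's proof: for (1) a parsimonious reduction from $\#\Sigma_2\SAT$ via negating the formula, converting to CNF, and applying the extended Dvo\v{r}\'ak--Woltran construction with the $y',\bar y'$ incomparability gadget and the $t,\bar t,b$ control gadget, projecting onto the free variables with credulous argument $\bar t$; and for (2) the analogous simpler reduction from $\#\Sigma_1\SAT$ reusing the Lemma~\ref{lem:numberext-numberp-hard} gadgets with projection onto the free variables and credulous argument $t$. The structural facts you flag as needing verification (incomparability of ranges across $Y$-assignments, range-dominance of $t$ over $\bar t$) are exactly the ones the paper's sketch asserts, so the two arguments coincide in substance.
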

\end{restateproposition}
\begin{proof}[Proof (Sketch)]
1. We state a parsimonious reduction from $\#\Sigma_2\SAT$ to $\PCC_\SEM$.
We use an extended version of the construction of \citex{DvorakWoltran10}.
Given a formula $\varphi(X)=\exists Y\forall Z\;\psi(X,Y,Z)$, where $X,Y,Z$ are sets of variables, and $\psi$ is a DNF.
Consider now the negation of $\varphi(X)$, i.e., $\varphi'(X)=\lnot\varphi(X)\equiv\forall Y\exists Z\; \lnot\psi(X,Y,Z)$.
Let $\psi'(X,Y,Z)$ be $\lnot\psi(X,Y,Z)$ in NNF.
Accordingly, $\psi'$ is a CNF, $\psi'(X,Y,Z)= \bigwedge_{i=1}^m C_i$ and $C_i$ is a disjunction of literals for $1\leqslant i\leqslant m$.
Note that, the formula $\varphi'(X)$ is of the same kind as the formula in the construction of \citex{DvorakWoltran10}.
Now define an argumentation framework $AF=(A,R)$, where
	\begin{align*}
		A &= \SB x, \bar x\SM x\in X\SE\cup \SB y, \bar y, y', \bar y'\SM y\in Y \SE\\
		  &\;\cup\SB z,\bar z\SM z\in Z\SE\cup\{t,\bar t, b\}\\
		R &= \SB(y',y'), (\bar y',\bar y'), (y,y'), (\bar y,\bar y'), (y,\bar y), (\bar y,y)\SM y\in Y \SE\\
		&\;\cup\{(b,b),(t,\bar t),(\bar t,t), (t,b)\}\\
		&\;\cup\SB(C_i,t) \SM 1\leqslant i\leqslant m \SE\\
		&\;\cup\SB(u,C_i) \SM u\in X\cup Y\cup Z, u\in C_i, 1\leqslant i\leqslant m \SE\\
		&\;\cup\SB(\bar u,C_i) \SM z\in X\cup Y\cup Z, \bar u\in C_i, 1\leqslant i\leqslant m \SE
	\end{align*}
Note that, by construction, the $y',\bar y'$ variables make the extensions w.r.t.\ the universally quantified variables $y$ incomparable.
Further observe that choosing $t$ is superior to selecting $\bar t$, as $t$ increases the range by one more.
(This is crucial in our case, as stage as well as semi-stable strive for range maximal extensions.)
If for every assignment over the $Y$-variables there exists an assignment to the $Z$-variables, then, each time, when there is a possible solution to $\psi'(X,Y,Z)$, so semantically $\neg\psi(X,Y,Z)$, w.r.t.\ the free $X$-variables, the extension will contain $t$.
As a result, the extensions containing $t$ correspond to the unsatisfying assignments.
Let $A(\varphi(X))$ be the set of assignments of a given $\#\Sigma_2\SAT$-formula, and $B(AF,P,a)$ be the set of stage/semi-stable extensions which contain $a$ and are projected to $P$.
Then, one can show that $\Card{A(\varphi(X))}=\Card{B(AF,X,\bar t)}$ proving the desired reduction (as $\bar t$ together with the negation of $\varphi(X)$ in the beginning, intuitively, is a double negation yielding a reduction from $\#\Sigma_2\SAT$).

2. Now turn to the case of admissible, stable, or complete extensions.
Again, we provide a similar parsimonious reduction, but this time,  from $\#\Sigma_1\SAT$ to $\PCC_\SEM$.
Consider a formula $\varphi(X)=\exists Y\; \psi(X,Y)$, where $X,Y$ are sets of variables, $\psi = \bigwedge_{i=1}^m C_i$ and $C_i$ is a disjunction of literals for $1\leqslant i\leqslant m$.
Essentially the reduction is the same, however we need the same  extension as in the proof of Lemma~\ref{lem:numberext-numberp-hard} and we neither need the $y',\bar y'$ nor---of course---the $z$ variables.
Define the framework $AF=(A,R)$ as follows:
\begin{align*}
	A &= \SB x, \bar x\SM x\in X\SE\cup \SB y, \bar y\SM y\in Y \SE\\
	  &\;\cup\{t,\bar t, b\}\cup\SB s_x,s_y\SM x\in X, y\in Y\SE\\
	R &= \{(b,b),(t,\bar t),(\bar t,t), (t,b)\}\\
	&\;\cup\SB(C_i,t) \SM 1\leqslant i\leqslant m \SE\\
	&\;\cup\SB(s_x,t),(s_y,t)\SM x\in X, y\in Y\SE\\
	&\;\cup\SB(x,s_x),(\bar x,s_x),(y,s_y),(\bar y,s_y)\SM x\in X, y\in Y\SE\\
	&\;\cup\SB(u,C_i) \SM u\in X\cup Y, u\in C_i, 1\leqslant i\leqslant m \SE\\
	&\;\cup\SB(\bar u,C_i) \SM z\in X\cup Y, \bar u\in C_i, 1\leqslant i\leqslant m \SE
\end{align*}
This time, let $A(\varphi(X))$ denote the set of satisfying assignments of an $\Sigma_1\SAT$ instance.
Then, define $B(AF,P,a)$ be the set of admissible/stable/complete extensions which contain $a$ and are projected to $P$.
Finally, one can show that $\Card{A(\varphi(X))}=\Card{B(AF,X,t)}$ showing the claimed reduction and $\#\cdot\NP$-hardness via parsimonious reductions.
\end{proof}%

%
}%

\end{document}